\documentclass{article}

\usepackage{amsmath,amsfonts,bm}

\def\eqref#1{equation~\ref{#1}}

\def\1{\bm{1}}

\def\vmu{{\bm{\mu}}}

\def\vd{{\bm{d}}}

\def\vn{{\bm{n}}}

\def\vp{{\bm{p}}}

\def\vx{{\bm{x}}}

\def\mA{{\bm{A}}}

\DeclareMathAlphabet{\mathsfit}{\encodingdefault}{\sfdefault}{m}{sl}
\SetMathAlphabet{\mathsfit}{bold}{\encodingdefault}{\sfdefault}{bx}{n}

\DeclareMathOperator*{\argmin}{arg\,min}

\usepackage{microtype}
\usepackage{graphicx}
\usepackage{subcaption}
\usepackage{booktabs} 
\usepackage{hyperref}

\PassOptionsToPackage{numbers,sort&compress}{natbib}
\usepackage[preprint]{neurips_2026}

\usepackage{amsmath}
\usepackage{amssymb}
\usepackage{mathtools}
\usepackage{amsthm}
\usepackage{wrapfig}
\usepackage[dvipsnames]{xcolor}
\usepackage{url}
\usepackage{tikz}
\usepackage{pgfplots}
\usepackage{pgfplotstable}

\pgfplotsset{compat=1.18}
\usepgfplotslibrary{groupplots, statistics}

\definecolor{myblue}{RGB}{31,119,180}
\definecolor{myorange}{RGB}{255,127,14}

\pgfplotsset{
    discard if not x/.style 2 args={
        x filter/.append code={
            \edef\tempa{\thisrow{#1}}
            \edef\tempb{#2}
            \ifx\tempa\tempb
            \else
                
            \fi
        }
    }
}

\theoremstyle{plain}
\newtheorem{theorem}{Theorem}[section]
\newtheorem{proposition}[theorem]{Proposition}
\newtheorem{lemma}[theorem]{Lemma}

\theoremstyle{definition}
\newtheorem{definition}[theorem]{Definition}
\theoremstyle{remark}

\usepackage[capitalize,noabbrev]{cleveref}

\usepackage[textsize=tiny]{todonotes}

\title{The Linear Centroids Hypothesis: \\ Features as Directions Learned by Local Experts}

\author{%
  Thomas Walker\thanks{Correspondence to \texttt{thomas.walker@rice.edu}.} \\
  Rice University\\
  \And
  Ahmed Imtiaz Humayun \\
  Google Research \\
  \And
  Randall Balestriero \\
  Brown University \\
  \And
  Richard Baraniuk \\
  Rice University \\
}

\begin{document}

\maketitle

\begin{abstract}
The Linear Representation Hypothesis (LRH) identifies features of a trained deep network (DN) as linear directions in the activation spaces, i.e., output spaces of intermediate layers. 
This characterization decouples the input-output maps learned by a DN from the organization of feature directions in its activation spaces. 
We introduce the {\bf Linear Centroids Hypothesis (LCH)}, which instead identifies features with linear directions among a DN's \textit{centroid spaces} -- where any vector denotes a centroid or summary of a \textit{local affine expert} characterizing the learned input-output maps of the DN exactly (e.g., for piecewise-affine DNs) or approximately (e.g., for smooth DNs like transformers).
We show that replacing intermediate activations with centroids yields a functional drop-in alternative for standard interpretability tools.
Empirically, this change yields sparser, more downstream-useful feature dictionaries on DINO ViTs, suppresses spurious directions on a controlled task, recovers interpretable circuits in GPT2-Large, and produces faithful gradient-based saliency maps. 
LCH unifies dictionaries, probing, circuits, and saliency maps into a single geometric object grounded in the network's input-output map — making interpretability mechanistic by construction rather than post hoc.
\end{abstract}

\begin{figure}[!h]
     \centering
     \begin{subfigure}[b]{0.24\columnwidth}
         \centering
         \caption*{Input Sample}
         \includegraphics[width=\textwidth]{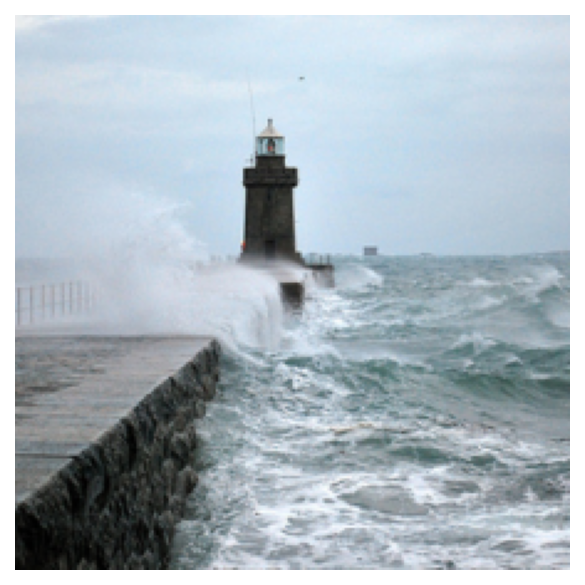}
     \end{subfigure}
     \begin{subfigure}[b]{0.24\columnwidth}
         \centering
         \caption*{Centroid}
         \includegraphics[width=\textwidth]{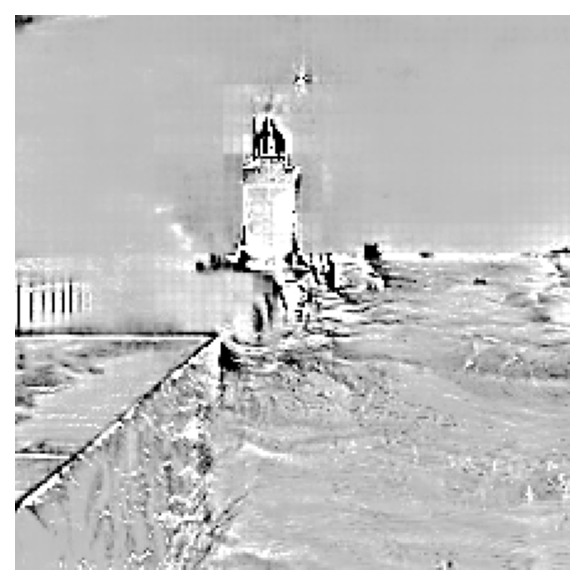}
     \end{subfigure}
     \hfill
     \begin{subfigure}[b]{0.24\columnwidth}
         \centering
         \caption*{Input Space}
         \includegraphics[width=\textwidth]{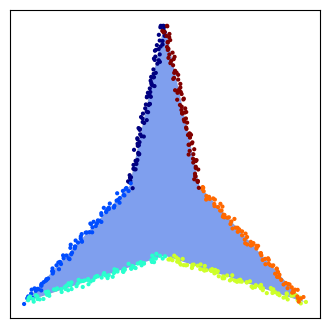}
     \end{subfigure}
     \begin{subfigure}[b]{0.24\columnwidth}
         \centering
         \caption*{Centroid Space}
         \includegraphics[width=\textwidth]{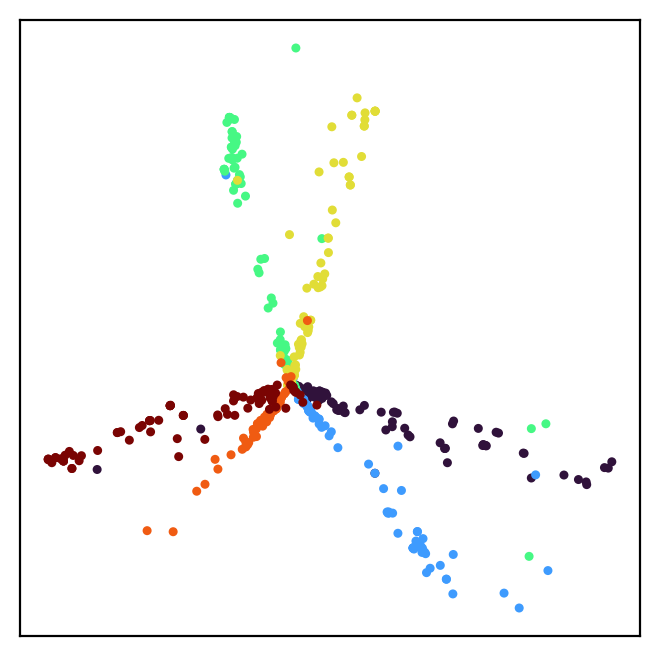}
     \end{subfigure}
    \caption{
    The \textbf{Linear Centroids Hypothesis} (LCH) posits that the {\em centroids} of a deep network (DN) represent features as linear directions.
    The centroids of a DN describe its functional mapping in a local region of the input space -- by being the row-sum of the DN's input-output Jacobians -- and thus can be considered descriptors of the local ``experts'' of a DN.
    In the first and second panels, we show that the centroids of a Swin-B~\citep{liuSwinTransformerHierarchical2021} transformer operating on inputs from ImageNet~\citep{krizhevskyImageNetClassificationDeep2012} can be used as saliency maps.
    In the third and fourth panels, we verify the LCH by training a DN to classify the interior from the exterior of a two-dimensional star-shaped polygon (third panel) and observing its centroids (fourth panel). 
    Indeed, the centroids operating on inputs sampled along the edge features (third panel) separate along linear directions (fourth panel).
    }
    \label{fig:1}
\end{figure}

\section{Introduction}\label{sec:introduction}

Understanding what \textit{features} a trained Deep Network (DN) has learned to extract from a given input distribution is a major focus of mechanistic interpretability research today ~\citep{amodeiConcreteProblemsAI2016,sharkeyOpenProblemsMechanistic2025}. \textit{Features} in the context of a DN are essentially a set of signals or patterns predictive of the output that a trained DN has learned to extract through its layer-wise operations. 
The current prevailing framework for discovery and identification of such features is the {\em Linear Representation Hypothesis} (LRH) that
posits features as linear directions in a DN's activation spaces, i.e., the output spaces of intermediate layers of a DN~\citep{elhage2022superposition,parkLinearRepresentationHypothesis2024}.
LRH has led to the development of various interpretability and feature discovery tools~\cite{kimInterpretabilityFeatureAttribution2018,trentonbrickenMonosemanticityDecomposingLanguage2023,hubenSparseAutoencodersFind2024}, operating on the assumption that any linear direction in which samples from an input distribution are ordered in an activation space, corresponds to a feature the network has learned to compute.

However, LRH suffers from three critical limitations.
First, it abstracts away from individual sub-components of the DN (e.g., neurons, layers, or attention mechanisms), making it difficult to link features to the computational graph~\citep{sharkeyOpenProblemsMechanistic2025}.
Second, it is unclear whether all linear directions of intermediate activations correspond to features, making it susceptible to identifying spurious features~\citep{researchNegativeResultsSparse2025}.
Third, its disjoint application to individual latent spaces has encumbered the field with the task of contextualizing features across sub-components~\citep{balaganskyMechanisticPermutabilityMatch2025}.
 
To make feature discovery mapping-aware, we introduce the \textbf{Linear Centroids Hypothesis (LCH)}.
LCH relies on the network's {\em centroids}, which directly summarize the local mapping or ``local experts'' learned by the network.
Surprisingly, we observe that samples with the same features naturally organize into linear directions within this centroid space, intuitively reframing features as ``aligned'' local experts (see third and fourth panels of \Cref{fig:1}).
LCH is motivated by the fact that continuous piecewise affine (CPA) DNs (e.g., ReLU networks) exactly and other smooth networks (e.g., transformers) approximately partition their input space into a Voronoi-like tiling of polytopal regions, on which distinct functional mappings (i.e., experts) operate~\citep{balestrieroGeometryDeepNetworks2019}.

Crucially, a DN's centroids can be efficiently calculated for any differentiable DN sub-component, since they are equal to the row-sum of its input-output Jacobian.
This transitions feature identification from understanding where in the space of intermediate representations an input lies to understanding the action of the local expert operating on the input, providing a grounded, mechanistic perspective on interpretability.

Our main contribution is providing a mechanistic framework for interpretability that is interchangeable with LRH by simply replacing intermediate activations with centroids.
We demonstrate that the transition to using the LCH prevents the identification of spurious features (see third panel of Figure \ref{fig:lch_evidence_vit}), improves the effectiveness of sparse autoencoders for constructing feature dictionaries (see Figure \ref{fig:centroid_saes}), facilitates the introduction of novel techniques for circuit discovery (see \Cref{fig:circuit_discovery}), and provides a faithful gradient-based saliency map in the form of the {\em local centroid} (see \Cref{fig:1,fig:saliency_map}).

Because LCH grounds feature discovery in the actual input-output map, it is significantly better at feature discovery and inherently resists spurious correlations. 
Ultimately, this centroid-based lens unifies feature dictionaries, probing, circuit discovery, and saliency maps under a single geometric object—the network’s induced input-space partition—making interpretability mechanistic by construction rather than post hoc.\footnote{Code to study the LCH can be found here: \url{https://github.com/ThomasWalker1/LinearCentroidsHypothesis}.} 

\section{Deep Networks as a Collection of Local Experts}\label{sec:background}

In this section, we demonstrate how DNs can be intuitively thought of as employing local experts to form their outputs, and that {\em centroids} describe these experts.

\subsection{Deep Networks}

A DN $f:\mathbb{R}^{d^{(0)}}\to\mathbb{R}^{d^{(L)}}$, with the convention that $d^{(0)}=d$, is a composition of $L$ functions $f^{(\ell)}:\mathbb{R}^{d^{(\ell-1)}}\to\mathbb{R}^{d^{(\ell)}}$.
These functions, referred to as layers, typically consist of an affine transformation followed by a nonlinearity.
We will denote the mapping across multiple layers as $f^{(\ell_1\leftarrow\ell_2)}:\mathbb{R}^{d^{(\ell_1-1)}}\to\mathbb{R}^{d^{(\ell_2)}}$ for $1\leq\ell_1<\ell_2\leq L$.

\subsection{The Geometry of Deep Networks}

For any DN,\footnote{Only requiring differentiability, which is the case almost everywhere for any DN.} the local behavior of the DN can be approximated locally with an affine transformation~\citep{lycheLocalSplineApproximation1975,schumakerSplineFunctionsBasic2007}.
Thus, for convenience, in the following, we consider continuous piecewise affine (CPA) DNs (i.e., those that exclusively use CPA operations, such as affine transformations, the ReLU activation function, or max pooling).
CPA DNs construct an implicit geometry by virtue of the fact that they partition their input space into local regions on which the input-output mapping corresponds to a specific affine transformation~\cite{balestrieroMadMaxAffine2018}.
We can think of these affine transformations as local ``experts.''


These regions can be parametrized as a {\em power-diagram subdivision} \citep{balestrieroGeometryDeepNetworks2019}, a hierarchical combination of {\em power diagrams}~\cite{imaiVoronoiDiagramLaguerre1985} (see Definition \ref{def:power_diagrams}).
Power diagrams are generalizations of the canonical Voronoi tiling, which tiles a space by identifying a finite set of {\em median} vectors and then assigning each point in the input space to the closest median with respect to the Euclidean distance.
In other words, each region (or tile) is the loci of the median vectors with respect to the Euclidean distance.
Power diagrams, are similar, except that regions are the loci of {\em centroids} with respect to Laguerre distance~\cite{imaiVoronoiDiagramLaguerre1985}.
The power diagram subdivision that parametrizes the geometry of a DN constitutes a layer-wise intersection of power diagrams~\cite{balestrieroGeometryDeepNetworks2019}.
This process is described in more detail in \Cref{sec:dn_geometry}.

Ultimately, each region of a DN's geometry has a {\em centroid}.
It is important to note that the centroid may not reside inside the region, indeed each region also has an associated {\em radius} parameter that influences the relative position of a centroid to the points in the region it defines.

\subsection{The Centroids of a Deep Network}

Since each sub-component of a CPA DN (e.g., layer or sequence of layers) is also CPA, each DN sub-component admits a geometry in its input space.
For $1\leq\ell_1<\ell_2\leq L$, we denote this geometry $\Omega^{(\ell_1\leftarrow\ell_2)}$ with regions $\left\{\omega^{(\ell_1\leftarrow\ell_2)}_{\boldsymbol{\nu}}\right\}_{\boldsymbol{\nu}\in\mathcal{V}}$.
Here, $\mathcal{V}$ denotes the set of equivalence classes on $\mathbb{R}^{d^{(\ell_1-1)}}$ of regions.
For notational simplicity, we will use $\boldsymbol{\nu}(\vx)\in\mathcal{V}$ to denote the region occupied by $f^{(1\leftarrow\ell_1-1)}(\vx)\in\mathbb{R}^{d^{(\ell_1-1)}}$.
Consequently, for a given point $\vx\in\mathbb{R}^d$, we can identify its corresponding centroid at a sub-component as $\vmu_{\boldsymbol{\nu}(\vx)}^{(\ell_1\leftarrow\ell_2)}\in\mathbb{R}^{d^{(\ell_1-1)}}$.

\begin{proposition}[\citealt{balestrieroGeometryDeepNetworks2019}]\label{prop:centroid_jacobian}
    Let $\boldsymbol{J}_{\vx}\left(f^{(\ell_1\leftarrow\ell_2)}\right)\in\mathbb{R}^{d^{(\ell_2)}\times d^{(\ell_1-1)}}$ denote the input-output Jacobian of $f^{(\ell_1\leftarrow\ell_2)}$ at $f^{(1\leftarrow\ell_1-1)}(\vx)$. Then, $\vmu^{(\ell_1\leftarrow\ell_2)}_{\boldsymbol{\nu}(\vx)}=\left(\boldsymbol{J}_{\vx}\left(f^{(\ell_1\leftarrow\ell_2)}\right)\right)^\top\mathbf{1}$.
\end{proposition}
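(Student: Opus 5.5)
The plan is to make explicit the power-diagram parametrization from \citet{balestrieroGeometryDeepNetworks2019} (Definition~\ref{def:power_diagrams} and \Cref{sec:dn_geometry}) and then read off the centroid by matching terms. Write $g=f^{(\ell_1\leftarrow\ell_2)}$ and $\vx'=f^{(1\leftarrow\ell_1-1)}(\vx)$. Since $g$ is CPA, on the region $\omega_{\boldsymbol{\nu}(\vx)}$ containing $\vx'$ it coincides with an affine map $\vx'\mapsto \mA_{\boldsymbol{\nu}}\vx'+\bm{b}_{\boldsymbol{\nu}}$. Differentiating on the interior of the region gives $\boldsymbol{J}_{\vx}(g)=\mA_{\boldsymbol{\nu}}$. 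The statement therefore reduces to showing that the centroid of this region equals $\mA_{\boldsymbol{\nu}}^\top\1$.

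To identify the centroid, I will use the max-affine-spline representation underlying the power diagram. Each output unit (and, hierarchically, each layer) of a CPA map is a max of affine forms, as in \citet{balestrieroMadMaxAffine2018}. The region is then the set on which the active selection of affine pieces is maximal. The key algebraic identity is
\[
\langle \bm{a},\vx'\rangle+b=-\tfrac12\|\vx'-\bm{a}\|^2+\tfrac12\|\vx'\|^2+\tfrac12\|\bm{a}\|^2+b .
\]
Apply it to the \emph{summed} affine form $\sum_k(\langle \bm{a}_k,\vx'\rangle+b_k)$ over the output coordinates $k$ selected by a region. The sum is $\langle \mA_{\boldsymbol{\nu}}^\top\1,\vx'\rangle+\1^\top\bm{b}_{\boldsymbol{\nu}}$. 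Maximizing it over the choices of $\boldsymbol{\nu}$ is the same as minimizing the Laguerre distance $\|\vx'-\vmu_{\boldsymbol{\nu}}\|^2-r_{\boldsymbol{\nu}}^2$, with
\[
\vmu_{\boldsymbol{\nu}}=\mA_{\boldsymbol{\nu}}^\top\1,\qquad r_{\boldsymbol{\nu}}^2=\|\mA_{\boldsymbol{\nu}}^\top\1\|^2+2\,\1^\top\bm{b}_{\boldsymbol{\nu}},
\]
because the $\|\vx'\|^2$ term does not depend on $\boldsymbol{\nu}$. The per-unit maxima are taken independently, so the maximum of the sum equals the sum of the maxima. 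Hence the argmax region of the summed form is exactly the CPA region. This exhibits the partition as a power diagram whose centroid is $\mA_{\boldsymbol{\nu}}^\top\1$, and therefore equals $\boldsymbol{J}_{\vx}(g)^\top\1$.

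For multiple layers, I will argue by induction along the hierarchical subdivision. Conditioned on the regions of earlier layers, the composition up to layer $\ell$ is affine. The next layer's max-affine units therefore have slopes equal to the current layer's slopes multiplied by the accumulated linear map. Repeating the sum argument shows that the refined region's centroid is the transpose of the full composed slope times $\1$. By the chain rule, that composed slope is exactly the Jacobian of $g$.

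I expect the main obstacle to be definitional rather than computational. The centroid in the paper is defined through the power-diagram subdivision, and such a parametrization is not unique: one could rescale terms or add constants. I need to fix the convention of \citet{balestrieroGeometryDeepNetworks2019}, namely summing the output units with unit weights, so that the centroid is well defined. I also need to justify that the sum of per-unit maxima is attained by the joint argmax, so that the regions coincide. Boundary points of measure zero, where the Jacobian is undefined, would be handled by taking any one-sided region.
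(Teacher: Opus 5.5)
Your proposal is correct and follows essentially the same route as \citet{balestrieroGeometryDeepNetworks2019}, which the paper cites rather than reproves. That route has two steps. First, the sum-of-maxima identity turns each layer into a power diagram whose centroid is the sum of the selected slopes and whose radius is $\|\vmu\|^2+2\,\1^\top\bm{b}$. Second, each layer's diagram is pulled back through the accumulated affine map of the earlier layers, so the centroids become $(\mA^{(\ell)}_{\bm{r}}\mA_{\omega})^\top\1$, which equals $\boldsymbol{J}^\top\1$ by the chain rule. One phrase to tighten: the max-affine representation holds per layer (each layer a max-affine spline operator with convex units), not for the output units of an arbitrary CPA map, and that is exactly why your inductive step over the subdivision is needed.
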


Note how Proposition \ref{prop:centroid_jacobian} enables the efficient computation of DN centroids at individual points in the input space.
Moreover, observe that for $\ell_2$ distinct from $\ell_2^\prime$, the centroids $\vmu_{\boldsymbol{\nu}}^{(\ell_1\leftarrow\ell_2)}$ and $\vmu_{\boldsymbol{\nu}}^{(\ell_1\leftarrow\ell^\prime_2)}$ live in the same space $\mathbb{R}^{d^{(\ell_1-1)}}$, but they are distinct since the maps $f^{(\ell_1\leftarrow\ell_2)}$ and $f^{(\ell_1\leftarrow\ell^\prime_2)}$ are distinct.
This motivates the \emph{centroid space} terminology, to refer to the space of centroids corresponding to one specific sub-component of the DN.

Importantly, Proposition \ref{prop:centroid_jacobian} extends this geometrical perspective beyond CPA DNs, as Jacobian vector products are readily computable for any differentiable mapping.

\subsection{Centroids as Saliency Maps}

Centroids describe the action and arrangement of the local experts of a DN, and reside in the input space of the sub-component on which they are computed.
Meaning, for sub-components whose input space is that of the DN, centroids have a visceral resemblance to a saliency map~\cite{simonyanDeepConvolutionalNetworks2014,smilkovSmoothGradRemovingNoise2017,selvarajuGradCAMVisualExplanations2017,sundararajan2017axiomatic}.
Indeed, with the first two panels of Figure \ref{fig:1}, we can see that the centroid of an input sample highlights its key features, like edges.

With Figure \ref{fig:saliency_map}, we support this by showing that the centroid (second panel) of a standard pre-trained ConvNext-L DN~\cite{liuConvNet2020s2022} at an input point (first panel) resembles some of the characteristics of the input.
In particular, we can improve the {\em signal} by consulting a neighborhood of local experts, namely, averaging the centroids of samples in a small neighborhood of the input point.\footnote{This is analogous to SmoothGrad~\cite{smilkovSmoothGradRemovingNoise2017}.}
Doing so yields a saliency map that vividly highlights characteristics of the input (third panel).

To reinforce the premise that these highlighted characteristics are those pertinent to the output of the DN, we repeat the computation but for an adversarially trained ConvNext-L DN~\cite{liuComprehensiveStudyRobustness2024}.
In the fourth panel of \Cref{fig:saliency_map}, we observe that the identified characteristics are much more salient, as expected, because the DN exhibits significantly greater robustness.
Consequently, we introduce the {\em local centroid} of an input, namely the average centroid obtained from neighborhood samples of the input, as a saliency method.

In \Cref{sec:local_centroids}, we explore the local centroids on other inputs and DN architectures.
Specifically, since it has been shown that current gradient-based saliency methods provide essentially unchanged explanations for both trained and untrained DNs~\citep{adebayo2018sanity}, we demonstrate that local centroids of randomly initialized DNs are uninformative.
Furthermore, we show how taking the local centroids of DN sub-components from the input space to an intermediate layer can elucidate the hierarchical features of a DN.

\begin{figure}[ht]
     \centering
     \begin{subfigure}[b]{0.24\columnwidth}
         \centering
         \includegraphics[width=\textwidth]{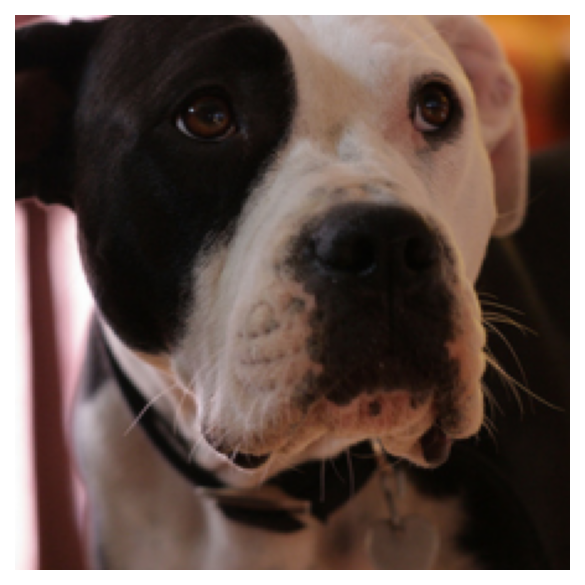}
         \caption*{\centering Input\\Sample}
     \end{subfigure}
     \hfill
     \begin{subfigure}[b]{0.24\columnwidth}
         \centering
         \includegraphics[width=\textwidth]{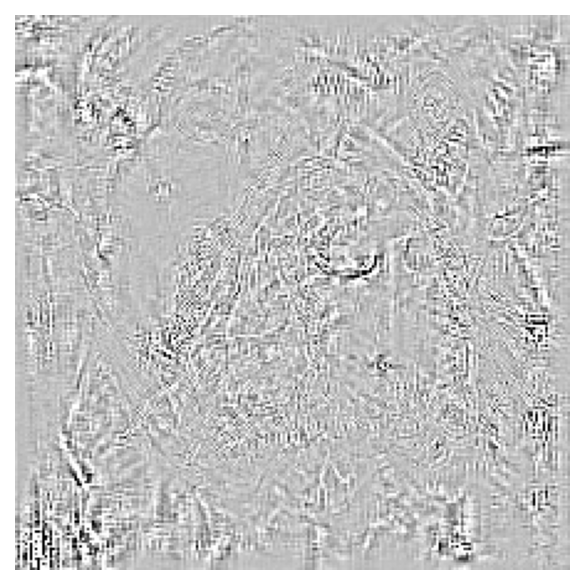}
         \caption*{\centering Centroid\\(Standard Training)}
     \end{subfigure}
     \begin{subfigure}[b]{0.24\columnwidth}
        \centering
        \includegraphics[width=\textwidth]{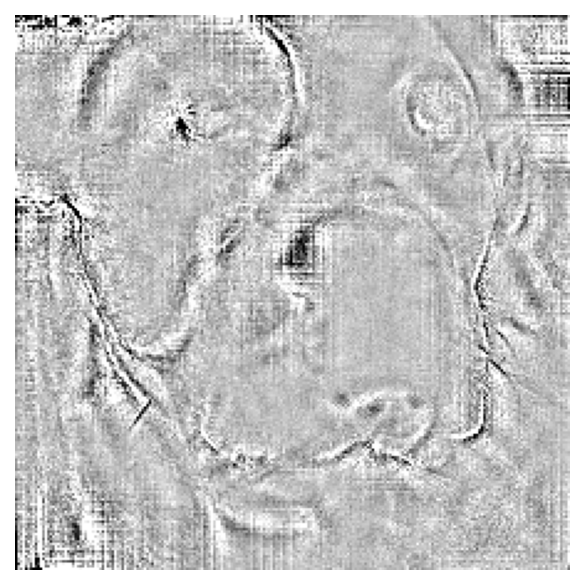}
        \caption*{\centering Local Centroid\\(Standard Training)}
     \end{subfigure}
     \hfill
     \begin{subfigure}[b]{0.24\columnwidth}
        \centering
        \includegraphics[width=\textwidth]{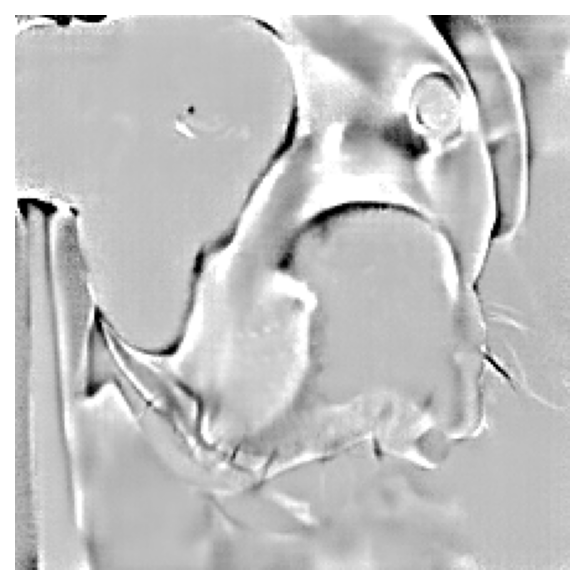}
        \caption*{\centering Local Centroid\\(Adversarial Training)}
     \end{subfigure}
    \caption{
    The centroids of a DN can be used as a saliency map.
    Here, we compute the centroids of a ConvNext-L DN either with the pre-trained weights from PyTorch~\cite{torchvision2016} or with weights obtained using the adversarial training methods of \citet{liuComprehensiveStudyRobustness2024}, at an input (first panel).
    In the second and third panels, we visualize the DN's centroid and local centroid, respectively, trained using standard methods.
    In the fourth panel, we visualize the local centroid of the adversarially trained DN.
    }
    \label{fig:saliency_map}
\end{figure}

\section{The Linear Centroids Hypothesis}\label{sec:centroid_affinity_identifies_features}

In \Cref{sec:features_and_circuits}, we make explicit what we mean by the {\em features of a DN} and highlight that the Linear Representation Hypothesis (LRH) is susceptible to identifying {\em spurious} features.
In \Cref{sec:deriving_CAH}, we propose the Linear Centroids Hypothesis (LCH) as a way to overcome this limitation of the LRH, since centroids are inherently map-aware.
In Section \ref{sec:spurious_features}, we confirm that the LCH is not susceptible to identifying spurious features.

\subsection{Defining Features and Circuits}\label{sec:features_and_circuits}

Inputs to DNs possess defining characteristics (e.g., the characteristic of having whiskers may be present for input images of cats).
A DN effectively groups its input space into regions corresponding to combinations of these characteristics, which it can then extract to compute its final output. 

The {\em features} of a DN refer to the specific characteristics the DN actively {\em utilizes} to form its output (e.g., using the characteristic of having whiskers to classify the input as a cat).
Crucially, utilization requires a two-step process: {\em extracting} a common representation for inputs sharing the characteristic, and ensuring this representation meaningfully {\em influences} the network's downstream behavior.
We refer to the influence induced by a particular feature as the corresponding circuit.
We relate this notion of DN features to prior works in \Cref{sec:comparison_to_literature}.

It is entirely possible for a DN to extract a representation for an input characteristic without it ever influencing the final output.
We refer to such characteristics as \emph{spurious} features.
The task of interpretability is to identify the features and circuits of a DN that are not spurious~\citep{sharkeyOpenProblemsMechanistic2025}.

Currently, the prevailing framework for interpretability is the Linear Representation Hypothesis (LRH), which posits that the features of a DN can be identified with linear directions formed by the intermediate activations of inputs~\citep{elhage2022superposition,parkLinearRepresentationHypothesis2024}.
This approach is limited, as analyzing an intermediate representation in isolation ignores the downstream mapping.

For ReLU DNs, recall that the zero-level sets of the neurons construct the DN's geometry.
Thus, the influence of an input is determined by the sign of the intermediate activations, which in turn determines which local expert the DN employs for that input. 
Using this, we can demonstrate that representing input characteristics as linear directions of intermediate activations is only a {\em necessary} condition for ensuring they influence the behavior of the DN.

\begin{lemma}\label{lem:necessary_cond}
    Let $\mathcal{X}_1$ and $\mathcal{X}_2$ be two sets of inputs with distinct characteristics. 
    A necessary condition, but not sufficient condition, for a ReLU DN to represent $\mathcal{X}_1$ and $\mathcal{X}_2$ as distinct features, is that the corresponding intermediate activations at some layer form linear directions.
\end{lemma}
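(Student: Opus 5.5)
The plan is to make the informal statement precise, then prove necessity directly from the ReLU geometry and refute sufficiency with an explicit counterexample. "Represent $\mathcal{X}_1$ and $\mathcal{X}_2$ as distinct features" will mean two things, in line with the definition of features in \Cref{sec:features_and_circuits}. First, the network extracts a common representation for each set. Second, these representations \emph{influence} the output differently. Concretely, the local experts used on $\mathcal{X}_1$ and on $\mathcal{X}_2$ differ, i.e.\ $\boldsymbol{\nu}(\vx_1)\neq\boldsymbol{\nu}(\vx_2)$ for $\vx_i\in\mathcal{X}_i$. Equivalently, by Proposition \ref{prop:centroid_jacobian}, the centroids $\vmu_{\boldsymbol{\nu}(\vx_i)}$ differ in a way that is consistent within each set. "Form linear directions" will mean that at some layer $\ell$ there is a direction $\vd$ and offset $b$ such that $\langle \vd, f^{(1\leftarrow\ell)}(\vx)\rangle - b$ has one sign on $\mathcal{X}_1$ and the opposite sign on $\mathcal{X}_2$.

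\textbf{Necessity.} For a ReLU DN the region $\boldsymbol{\nu}(\vx)$ is determined by the sign pattern of all pre-activations across layers; these zero-level sets build the power-diagram subdivision recalled in \Cref{sec:dn_geometry}. Suppose the two sets use different local experts. Then some neuron $k$ at some layer $\ell$ has different signs on the two sets. The key step is to go from ``different signs for particular inputs'' to ``a sign that separates the sets''. We do this by taking the first layer and neuron at which the sign patterns of the two sets disagree in a consistent way. This consistency is exactly what ``a common representation per feature'' provides: all of $\mathcal{X}_i$ share the same pattern at that neuron. The pre-activation of neuron $k$ is $\langle \boldsymbol{W}^{(\ell)}_{k}, f^{(1\leftarrow\ell-1)}(\vx)\rangle + b^{(\ell)}_k$, which is affine in the layer-$(\ell-1)$ activation. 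Its sign therefore gives a hyperplane $\{\vd=\boldsymbol{W}^{(\ell)}_k\}$ that separates the activations of $\mathcal{X}_1$ from those of $\mathcal{X}_2$. In other words, the activations order along the direction $\vd$, as required.

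\textbf{Non-sufficiency.} I will give a small explicit counterexample. Take a two-layer ReLU network whose first-layer activations separate $\mathcal{X}_1$ and $\mathcal{X}_2$ along a direction $\vd$, but whose next layer assigns zero weight to that direction. The construction is $f(\vx)=\boldsymbol{a}^\top\mathrm{ReLU}(\mA\vx)$ with $\boldsymbol{a}\perp$ the separating neuron, or with the separating coordinate killed by a subsequent ReLU that is always inactive on both sets. Then both sets fall into the same region, so they have the same Jacobian and, by Proposition \ref{prop:centroid_jacobian}, the same centroid. The output is therefore uninfluenced by the distinction, and the characteristic is spurious in the sense of \Cref{sec:features_and_circuits}.

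The main obstacle is the necessity step. Distinct regions only guarantee that \emph{some} neuron differs for each pair of points; they do not guarantee that a single neuron differs uniformly across the sets. I would handle this by building the uniformity into the definition of representing a feature, as the common-representation requirement above. If that is judged too strong, the fallback is a weaker statement: the activations are separable by a union or finite intersection of such neuron hyperplanes. For convex, consistently-coded sets this reduces to a single direction.
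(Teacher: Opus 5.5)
Your proposal follows the paper's proof essentially step for step: necessity comes from a single neuron whose ReLU zero-level hyperplane separates the two sets' activations at the preceding layer, and insufficiency comes from a downstream layer with zero weights or an always-inactive ReLU that collapses the separation. The paper simply asserts the per-set uniformity that you rightly make explicit; the one slip is that distinct regions and distinct centroids are not equivalent (a neuron with zero outgoing weight can flip sign without changing $\boldsymbol{J}^\top\mathbf{1}$), so define ``distinct features'' via consistently distinct centroids/Jacobians, which is all either direction of your argument uses, and drop the claim that both sets fall into the same region, since it fails for your $\boldsymbol{a}\perp$ construction whenever the separating neuron changes sign between the sets.
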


\textit{Proof.} For necessity, assume the network successfully represents $\mathcal{X}_1$ and $\mathcal{X}_2$ as distinct features. 
For the representations to have consistently different influences, the network must assign different activation patterns to the inputs of $\mathcal{X}_1$ and $\mathcal{X}_2$. 
This means that, at some intermediate layer, at least one neuron activates exclusively for the latent activations of only $\mathcal{X}_1$ or $\mathcal{X}_2$. 
Therefore, the activation level set of this neuron must separate the latent activations of the two input sets. 
Because the activation level-set of a ReLU neuron is a hyperplane, the latent activations of $\mathcal{X}_1$ and $\mathcal{X}_2$ must lie in distinct half-spaces defined by this hyperplane. 
Geometrically, being strictly separable by a hyperplane implies that the activations separate along distinct linear directions in that intermediate feature space.

For insufficiency, assume that the intermediate activations of $\mathcal{X}_1$ and $\mathcal{X}_2$ do form distinct linear directions (i.e., are linearly separable) at some intermediate layer $l$. 
This intermediate separation does not guarantee that the final network output will represent them as distinct features. 
A subsequent layer $l+k$ could easily map these separated activations to the exact same output representation. 
For example, if the weight matrix of a subsequent layer is composed entirely of zeros, or if a sufficiently large negative bias is applied such that the ReLU function outputs zero for all inputs from both $\mathcal{X}_1$ and $\mathcal{X}_2$, the previously distinct representations collapse into a single point.
Therefore, while intermediate linear separation is required, it is not sufficient to guarantee distinct feature representation at the output.\qed

Lemma \ref{lem:necessary_cond} formalizes a flaw in the LRH, in that it assumes features are mapped along linear subspaces but entirely decouples this from the learned input-output map.
Specifically, an intermediate linear separation can be easily collapsed by subsequent layers; thus, there remains an ambiguity as to whether identified features actually drive the network's mapping or are merely spurious artifacts~\citep{smithStrongFeatureHypothesis2024}.
This mapping-blindness may explain why current interpretability techniques that study the structure of intermediate activations, such as linear classifiers~\cite{kimInterpretabilityFeatureAttribution2018} and sparse autoencoders~\cite{trentonbrickenMonosemanticityDecomposingLanguage2023,hubenSparseAutoencodersFind2024}, exhibit brittleness when applied to downstream tasks~\cite{nicolsonExplainingExplainabilityRecommendations2025,researchNegativeResultsSparse2025}, and are insufficient for explaining the functional behavior of DN sub-components~\citep{sharkeyOpenProblemsMechanistic2025}.

\subsection{Using Geometry to Identify Features}\label{sec:deriving_CAH}

To overcome the LRH's mapping-blindness, we propose anchoring feature discovery to the network's actual input-output mapping via its centroids.
In particular we propose the \textbf{Linear Centroids Hypothesis (LCH)}:
\begin{center}
    \textit{The features of a DN are represented by linear directions in its centroid spaces.}
\end{center}
Intuitively, the LCH states that the features of a DN can be identified by observing the collection of inputs acted on by ``aligned'' local experts.
Because centroids are derived directly from the network's Jacobians, this formulation is inherently mapping-aware.

In Appendix \ref{sec:formal_theory}, we understand what the LCH means from the perspective that views the geometry of a DN as constructed through the collection of hyperplanes defined by each neuron of the DN (see Appendix \ref{sec:dn_geometry}).
Although this is only applicable to CPA DN, in this context, linearly aligned centroids are equivalent to a partition geometry formed by hyperplane boundaries that accumulate along boundaries.
Consequently, the LCH is supported by prior works that empirically observe that this type of structured geometry explains many properties of DNs.
\citet{humayunDeepNetworksAlways2024} demonstrates that the generalization and robustness properties of DNs -- including transformers, residual networks, and convolutional neural networks -- emerge as the regions of the DN's geometry accumulate along the decision boundaries of the input space.
Similarly, the size of the linear region was linked to the toxicity property of large language models \citep{balestrieroCharacterizingLargeLanguage2023}.
Likewise, the density of linear regions in the geometry of DN sub-components has been connected to the reasoning capabilities of large language models \citep{cosentinoReasoningLargeLanguage2024}.

\begin{figure}[t]
     \centering
     \begin{subfigure}[b]{0.28\columnwidth}
         \centering
         \includegraphics[width=\textwidth]{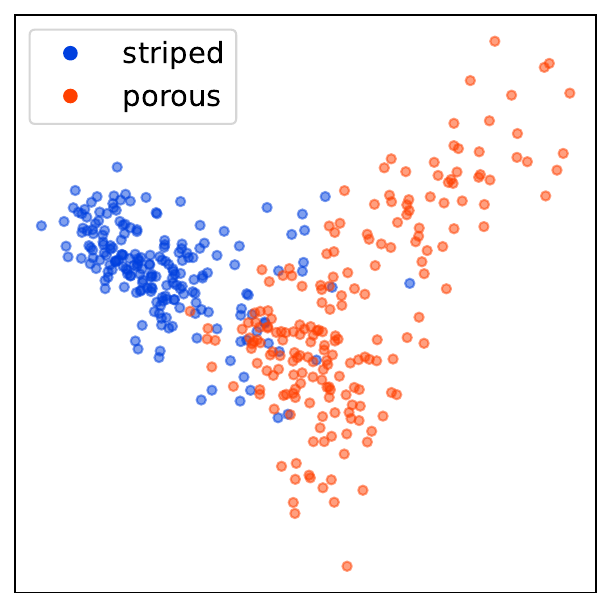}
     \end{subfigure}
     \begin{subfigure}[b]{0.28\columnwidth}
         \centering
         \includegraphics[width=\textwidth]{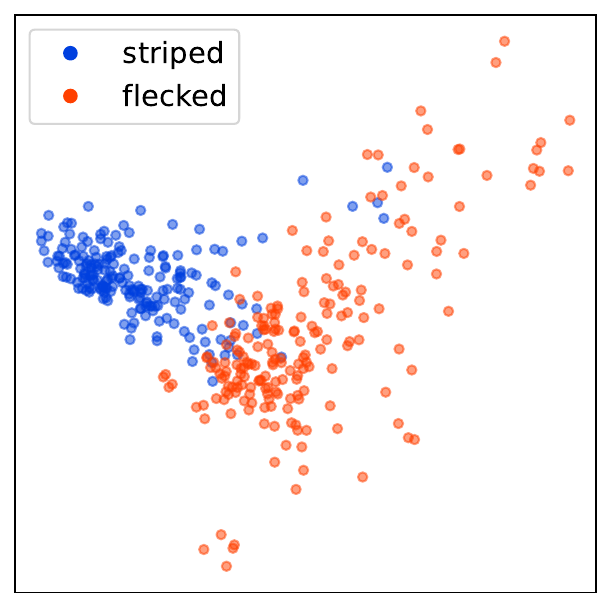}
     \end{subfigure}
    \begin{subfigure}[b]{0.34\textwidth}
        \centering
        \begin{tikzpicture}
            \begin{axis}[
                width=5.5cm,
                height=4.5cm,
                xlabel={Correlation},
                ylabel={Accuracy},
                legend pos=south east,
                legend style={font=\scriptsize},
                label style={font=\footnotesize},
                tick label style={font=\footnotesize}
            ]
            
            \addplot[
                color=myorange, thick, mark=*, mark size=1pt,
                error bars/.cd, y dir=both, y explicit
            ] table [
                x=correlation, 
                y=latent_accuracy_mean, 
                y error=latent_accuracy_std, 
                col sep=comma
            ] {data/spurious_feature_hypothesis.csv};
            \addlegendentry{LRH}
            
            \addplot[
                color=myblue, thick, mark=square*, mark size=1pt,
                error bars/.cd, y dir=both, y explicit
            ] table [
                x=correlation, 
                y=gradient_accuracy_mean, 
                y error=gradient_accuracy_std, 
                col sep=comma
            ] {data/spurious_feature_hypothesis.csv};
            \addlegendentry{LCH}
            
            \end{axis}
        \end{tikzpicture}
    \end{subfigure}
    \caption{
    \textbf{The LCH holds true for pre-trained vision models on ImageNet and mitigates the identification of spurious features.}
    In the first and second panels, we show that the centroids of inputs representing distinct features of the DN separate along linear directions under principal component analysis (PCA).
    We study the centroids of the fourth layer of a pre-trained ResNet50 model on ImageNet.
    We perform PCA on the centroids of inputs from two distinct classes in the DTD dataset.
    In the third panel, we train a DN on a version of FashionMNIST that has a color feature correlated with the corresponding classification task.
    We train DNs on this dataset at different correlation levels and measure the ability of a linear probe to extract the color feature using either the intermediate activations (LRH) or the centroids (LCH) of the DN.
    We consider five randomly initialized DNs at each correlation, and report the mean and standard deviation of the probe accuracy.
    }
    \label{fig:lch_evidence_vit}
\end{figure}

We provide evidence for the LCH with the third and fourth panels of Figure \ref{fig:1} and the first and second panels of Figure \ref{fig:lch_evidence_vit}.

In the third and fourth panels of \ref{fig:1}, we consider a DN trained to classify whether two-dimensional input points are inside or outside the star-shaped polygon shown in the third panel of Figure \ref{fig:1}.
In this instance, the characteristics of the input space that the DN ought to acquire as features are the interior and exterior of the polygon.
By sampling input points from the edges of the star within the DN's input space and observing their centroids, we can clearly see the emergence of linear directions, see the fourth panel of Figure \ref{fig:1}.
In \Cref{sec:other_polygons}, we replicate this for other polygons and DNs that are not CPA.

Similarly, for a ResNet50 DN~\citep{heDeepResidualLearning2016} pre-trained on ImageNet~\citep{krizhevskyImageNetClassificationDeep2012}, we observe in the first and second panels of Figure \ref{fig:lch_evidence_vit} that the centroids for inputs exhibiting distinct texture-based features (obtained using the DTD dataset~\citep{cimpoi14describing}) separate into distinct linear directions under a principal component analysis.

In \Cref{sec:prh}, we further support the LCH by showing that it validates the Platonic Representation Hypothesis by demonstrating that increasingly larger models--of different modalities--converge to the same representations of features as characterized by linear directions of centroids.

\subsection{The Linear Centroids Hypothesis Mitigates Spurious Feature Identification}\label{sec:spurious_features}

As established by Lemma \ref{lem:necessary_cond}, the LRH suffers from an ambiguity as to whether intermediate activations that form linear directions actually correspond to features or are merely spurious~\citep{smithStrongFeatureHypothesis2024}. 
In contrast, the LCH is mapping-aware, as it identifies structures in the centroids that are inherently tied to the functional behavior of the DN through the Jacobian. 
We demonstrate that this grounding in the input-output map makes LCH less susceptible to spurious features than the LRH.

For this, we train a convolutional neural network to classify the FashionMNIST dataset~\citep{xiaoFashionMNISTNovelImage2017}.
However, at initialization, we color the images by sampling from a discrete set of 10 colors and varying the degree of correlation between the coloring and the dataset labels.
When the coloring is fully correlated, the color feature is not spurious; when it is random, it is spurious.
We evaluate the extent to which the color feature is linearly represented by training a linear probe to predict color from the DN's intermediate activations or centroids and measuring its accuracy.
In the third panel of \Cref{fig:lch_evidence_vit}, it is clear that the centroids of the color feature form linear directions to the extent that the feature is relevant to the task, in contrast to intermediate activations, which are linear even when the feature is spurious.
This demonstrates that LCH effectively filters out spurious correlations.

\section{Interpretability Under the Linear Centroids Hypothesis}\label{sec:experiments}

We now replicate standard LRH-based interpretability studies under the LCH, on models including DINOv2~\citep{oquabDINOv2LearningRobust2024}, DINOv3~\citep{simeoniDINOv32025}, GPT2~\citep{radfordLanguageModelsAre2019}, and Llama-3.1-8B~\citep{grattafioriLlama3Herd2024}.
In \Cref{sec:compute_requirements}, we detail the differences in computational resources required to perform these experiments under the LCH compared to under the LRH.
The only difference arises in the extraction of the activations, for which centroids take around $10$-$15\%$ longer.
In the grand scheme of running these models, this addition is negligible.

\subsection{Feature Extraction with Sparse Autoencoders}\label{sec:saes}

Here, we compare sparse autoencoders trained on the intermediate activations and centroids from DINOv2 and DINOv3.
Because centroids encode the local mapping rather than just spatial positioning in a latent space, we hypothesize that feature dictionaries built on LCH will be more semantically robust and less prone to spurious correlations.
Similar to \citet{hindupurProjectingAssumptionsDuality2025}, we extract the intermediate activations and centroids of Imagenette \citep{howardFastai2020} from these models at the last multi-layer perceptron block to train a TopK sparse autoencoder \citep{gaoScalingEvaluatingSparse2025}.

We compare the obtained DINOv2 feature dictionaries in the following ways:
Train linear probes on the feature decompositions of the train set of Imagenette to classify its classes, and then evaluate the accuracy of the probe on the feature decompositions of the test set of Imagenette. Record the frequency at which the features of the sparse autoencoder fire on the test set of Imagenette.

\pgfplotstableread[col sep=comma]{data/cls_comparison_10exp_32K.csv}\datatableCLS

\pgfplotstablecreatecol[
    create col/assign/.code={
        \edef\tempa{\thisrow{activation_type}}
        \edef\tempb{centroids}
        \ifx\tempa\tempb
            \edef\tempval{\thisrow{cosine_similarity}}
            \pgfkeyslet{/pgfplots/table/create col/next content}\tempval
        \else
            \def\tempNaN{NaN}
            \pgfkeyslet{/pgfplots/table/create col/next content}\tempNaN
        \fi
    }
]{centroids_sim}{\datatableCLS}

\pgfplotstablecreatecol[
    create col/assign/.code={
        \edef\tempa{\thisrow{activation_type}}
        \edef\tempb{latents}
        \ifx\tempa\tempb
            \edef\tempval{\thisrow{cosine_similarity}}
            \pgfkeyslet{/pgfplots/table/create col/next content}\tempval
        \else
            \def\tempNaN{NaN}
            \pgfkeyslet{/pgfplots/table/create col/next content}\tempNaN
        \fi
    }
]{latents_sim}{\datatableCLS}

\begin{figure}[ht]
    \centering
    \begin{tikzpicture}
    \begin{groupplot}[
        group style={
            group size=3 by 1,
            horizontal sep=1.2cm,
        },
        width=5cm,
        height=4.5cm,
        grid=major,
        grid style={dashed, gray!30},
        legend style={font=\scriptsize, cells={anchor=west}},
        tick label style={font=\scriptsize},
        label style={font=\scriptsize},
        title style={font=\bfseries}
    ]

    \nextgroupplot[
        xlabel={K},
        ylabel={Probe Accuracy},
        xmode=log,
        log basis x=2,
        xtick={8,16,32,64},
        xticklabels={8,16,32,64},
        ymin=88, ymax=96,
    ]
    \addplot[color=myblue, mark=*, thick] 
        table[x=K, y=Centroids, col sep=comma] {data/probe_accuracy_vs_sparsity.csv};

    \addplot[color=myorange, mark=square*, thick, dashed] 
        table[x=K, y=Latents, col sep=comma] {data/probe_accuracy_vs_sparsity.csv};

    \nextgroupplot[
        xlabel={Rank},
        ylabel={Log Activation Frequency},
        ymin=0, ymax=6,
        legend pos=north east,
        each nth point={20},
        filter discard warning=false
    ]
    \addplot[color=myblue, thick] 
        table[x=rank, y=log_activation_frequency, col sep=comma, discard if not x={series}{Centroids}] {data/all-exp10-k32-seed0_firing_dist.csv};

    \addplot[color=myorange, thick, dashed] 
        table[x=rank, y=log_activation_frequency, col sep=comma, discard if not x={series}{Latents}] {data/all-exp10-k32-seed0_firing_dist.csv};

    \nextgroupplot[
        xlabel={Cosine Similarity},
        ylabel={Density},
        ymin=0,
        enlarge x limits=false,
        legend pos=north west
    ]
    \addplot [
        hist={density, bins=40, data min=0, data max=1},
        fill=myblue, fill opacity=0.4, draw=myblue, thick
    ] table [y=centroids_sim] {\datatableCLS};
    \addlegendentry{LCH}

    \addplot [
        hist={density, bins=40, data min=0, data max=1},
        fill=myorange, fill opacity=0.4, draw=myorange, thick
    ] table [y=latents_sim] {\datatableCLS};
    \addlegendentry{LRH}

    \end{groupplot}
    \end{tikzpicture}
    
    \caption{
    \textbf{Feature dictionaries from sparse autoencoders trained on centroids transfer better on downstream tasks, are more active on unseen inputs, and persist more coherently across model sizes.}
    In the first panel, we report the accuracy of linear probes on the Imagenette test set, trained to classify an input's label based on which features in the feature dictionary it activates.
    In the second panel, we measure the frequency with which features fire when activations from the Imagenette test set are passed through the sparse autoencoder.
    In the third panel, we measure the maximum cosine similarity between a feature from the DINOv2 dictionary and features from the DINOv3 dictionary.}
    \label{fig:centroid_saes}
\end{figure}

With the first panel of Figure \ref{fig:centroid_saes}, we observe that the LCH feature dictionary exhibits greater generalization, as compared to the LRH feature dictionary.
We reproduce this result for sparse autoencoders trained on a ten-class subset of ImageNet~\cite{krizhevskyImageNetClassificationDeep2012} containing dog breeds in \Cref{sec:qualitative}.
In particular, we use this setting to also demonstrate that qualitatively, the features identified using LCH are more semantically coherent.

With the second panel of Figure \ref{fig:centroid_saes}, we show that more of the LCH feature dictionary is more active on test samples from Imagenette than the LRH feature dictionary.
This highlights how, under the LCH, identified features are less likely to be spurious.

Next, we compare the feature dictionaries obtained from DINOv2 and DINOv3.
For a given activation type (i.e., intermediate activations or centroids), we compute the maximum cosine similarity of each DINOv2 feature with the full DINOv3 dictionary.
Since DINOv3 refines the features learned by DINOv2, the dictionary should exhibit high cosine similarities with those features.
Indeed, this is what we observe in the third panel of Figure \ref{fig:centroid_saes} for the feature dictionaries obtained using centroids.
In contrast, for the dictionary obtained using intermediate activations, a large proportion of the features have cosine similarities of around $0.4$.
This bimodal distribution suggests that the features identified by LRH do not correlate across the models and are instead spurious artifacts. 
In contrast, because centroids capture the functional mapping rather than arbitrary intermediate geometries, they converge reliably across model scales, strongly validating the Platonic Representation Hypothesis~\citep{huh2024position}.

\subsection{Circuit Discovery using Attribution Metrics}

Because centroids explicitly summarize the local experts driving the input-output map, their direct relationship with the computational graph offers a powerful mechanism for circuit discovery.
Rather than relying on activation magnitudes, we can directly query the mapping to introduce a novel attribution metric using centroids~\citep{mengLocatingEditingFactual2022,wangInterpretabilityWildCircuit2023,goldowsky-dillLocalizingModelBehavior2023}, and demonstrate how it can be used to perform circuit discovery on GPT2-Large~\cite{radfordLanguageModelsAre2019}.

Formally, let $f$ be a DN and $f^{(i,\ell)}$ be the same DN but with the $i^\text{th}$ neuron of the $\ell^{\text{th}}$ layer manipulated. 
The attribution of neuron $i$ to the features of a collection of samples $\mathcal{N}$ is quantified as 
\begin{equation}\label{eq:neuron_attribution}
    s_{\mathcal{N}}^{(i,\ell)}:=\frac{1}{\vert\mathcal{N}\vert}\sum_{\vx\in\mathcal{N}}\frac{\left\Vert\vmu^{f}_{\vx}-\vmu^{f^{(i,\ell)}}_{\vx}\right\Vert_2}{\left\Vert\vmu^{f}_{\vx}\right\Vert_2},
\end{equation}
where $\vmu_{\vx}^f$ and $\vmu^{f^{(i,\ell)}}_{\vx}$ are the centroids of $f$ and $f^{(i,\ell)}$ at $\vx$ respectively.
Quantifying the attribution of a neuron to the local features of a sample point $\vx$ can be done by taking $\mathcal{N}$ to be $\mathcal{B}_{\epsilon}(\vx)=\left\{\vx^\prime\in\mathbb{R}^d:\left\Vert\vx-\vx^\prime\right\Vert_2<\epsilon\right\}$.\footnote{Henceforth, we will use $s^{(i,\ell)}$ to denote $s_{\mathcal{B}_{\epsilon}(\vx)}^{(i,\ell)}$ unless stated otherwise.}

By patching each neuron in a layer of GPT2-Large, \citet{clementWeFoundNeuron2023} observed that the thirty-first layer multi-layer perceptron contains a neuron which is responsible for predicting the ``an'' token. 

We demonstrate that similar analyses can be conducted more simply by applying \Cref{eq:neuron_attribution} to a neighborhood of the last token embeddings at the input of the thirty-first layer on the prompt ``I climbed up the pear tree and picked a pear. I climbed up the apple tree and picked.'' 
In \Cref{fig:circuit_discovery}, the distribution of neuron attribution values is heavily skewed, with the neuron identified by \citet{clementWeFoundNeuron2023}, marked in black, sitting within the top $99.8^\text{th}$ percentile of values.
This demonstrates that, rather than performing extensive ablation studies on all the neurons of GPT2-Large, querying the network's local mapping via \eqref{eq:neuron_attribution} effectively isolates functional circuits and filters out irrelevant neurons. 
In Appendix \ref{sec:neuron_attribution_robustness}, we use this example to demonstrate that \eqref{eq:neuron_attribution} is robust as an attribution metric.


\begin{figure}[ht]
    \centering
    \vspace{0pt}
    \begin{minipage}[b]{0.48\linewidth}
        \centering
        \includegraphics[width=\textwidth]{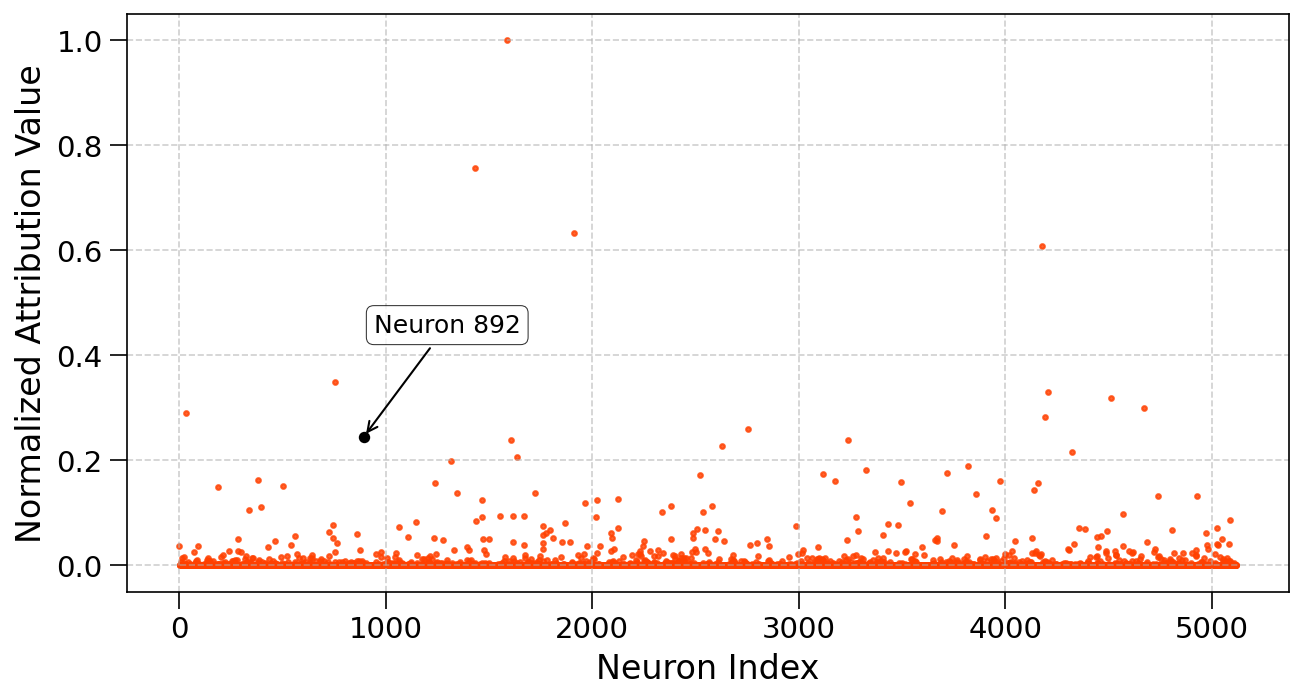}
        \caption{
        \textbf{Neuron-attribution metrics derived using LCH can quickly filter out irrelevant neurons, facilitating circuit discovery.}
        We prompt GPT2-Large and note the normalized attribution value (using \Cref{eq:neuron_attribution}) in a neighborhood of the embedding at the input of the multi-layer perceptron block at the thirty-first layer of the last token of this prompt. 
        The neighborhood is constructed by sampling $256$ points within a radius of $0.25$ of the embedding. 
        We normalize these values to the range $[0, 1]$. In black we indicate the $892^{\text{nd}}$ neuron in the multi-layer perceptron.}
        \label{fig:circuit_discovery}
    \end{minipage}\hfill
    \begin{minipage}[b]{0.48\linewidth}
        \centering
        \includegraphics[width=\textwidth]{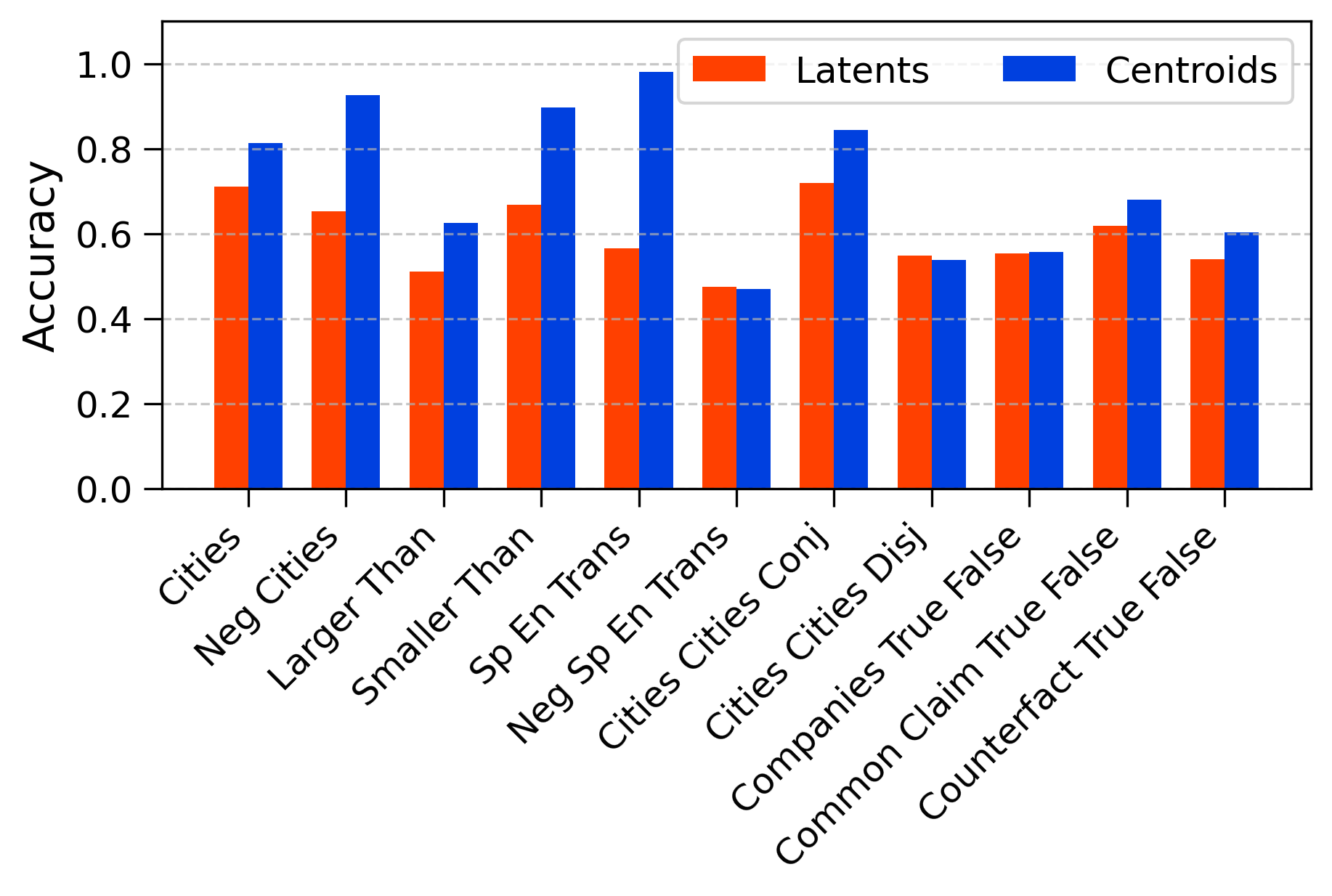}
        \caption{
        \textbf{Linear probes obtained using LCH exhibit greater generalization than those obtained using LRH.}
        We obtain mass-mean probes on the $\mathtt{likely}$ dataset from the twelfth layer of Llama-3.1-8B, either using intermediate activations or centroids extracted from the multi-layer perceptron component. 
        We then test these probes using other datasets from \citet{marksGeometryTruthEmergent2024}.}
        \label{fig:probes}
    \end{minipage}
    
\end{figure}

\subsection{Concept Discovery using Linear Probes}

Probing is another technique that exploits linear structures in DNs to either extract features \citep{kimInterpretabilityFeatureAttribution2018}, extract representations \citep{nandaEmergentLinearRepresentations2023}, or classify inputs \citep{marksGeometryTruthEmergent2024}. 
Here, we explore the latter of these applications by forming linear classifiers to discern the truthfulness of input statements to large language models. 
We adopt the mass-mean probes of \citet{marksGeometryTruthEmergent2024}, along with their datasets, to test the generalization capacity of these probes. 
Using the $\mathtt{likely}$ dataset, we obtain mass-mean probes from the twelfth layer of the Llama-3.1-8B large language model \citep{grattafioriLlama3Herd2024}, either using the latent activations or the centroids extracted from the multi-layer perceptron component. 
The $\mathtt{likely}$ dataset is constructed as a classification problem of whether sample tokens are likely or unlikely under the model's logit distribution for non-factual textual inputs.
The other datasets are formulated as classification problems between factually truthful and untruthful statements. Therefore, \textit{likely} is identifying plausibility in the model's outputs rather than a concept of truthfulness. 
Because centroid-based mass-mean probes generalize more effectively to these truth-identifying datasets (see Figure \ref{fig:probes}), it confirms that LCH captures the \textit{action} (the mapping) of outputting a truthful statement, rather than just the isolated concept. 
This underscores the intuition that LCH offers a mechanistic perspective by evaluating when the local experts of a DN align.


\section{Discussion}\label{sec:discussion}

The Linear Representation Hypothesis posits that feature discovery involves finding linear subspaces in the activation spaces of DN, but this critically decouples those features from the network's actual input-output map.
To make feature discovery mapping-aware, we introduced the Linear Centroids Hypothesis (LCH), which posits that feature discovery amounts to finding linear directions in {\em centroid space}.
Unlike intermediate representations, DN centroids are mapping-aware, hierarchically defined, intuitive artifacts of DNs that describe the DN's local experts.

Because centroids are efficiently accessible via Jacobian vector products, they serve as a drop-in replacement for activations in standard interpretability pipelines.
Consequently, we can demonstrate that LCH is operational and inherently resists the identification of spurious features.

Ultimately, LCH unifies feature dictionaries, probing, circuits, and saliency maps into a single geometric framework, ensuring that interpretability is mechanistically grounded in how the network actually computes.

\textbf{Limitations and Future Work.} The power diagram subdivision of a DN is parameterized by both centroids and radii. 
This study focuses entirely on the centroids, leaving the radii—which determine whether a centroid is contained within its own region—unexplored. 
Future research should investigate these radius parameters, as integrating them could provide an even more comprehensive geometric interpretation of the network's local experts and their decision boundaries.

Additionally, while this work introduced the \textit{local centroid} as a faithful, gradient-based saliency map to highlight relevant features for single inputs, future efforts should extend this application. 
Testing local centroids across a wider variety of DN architectures and broader input distributions will further validate their explanatory power within this mapping-aware framework.




\section*{Acknowledgments}

This work was supported by ONR grant N00014-23-1-2714, ONR MURI N00014-20-1-2787, DOE grant DE-SC0020345, and DOI grant 140D0423C0076.

\bibliographystyle{icml2026}
\bibliography{references}

\newpage
\appendix
\crefalias{section}{appendix}
\onecolumn

\section{Deep Network Geometry}\label{sec:dn_geometry}

The regions forming the geometry of a DN are constructed by the intersection of a collection of hyperplanes~\cite{montufarNumberLinearRegions2014,balestrieroMadMaxAffine2018,haninComplexityLinearRegions2019}.
Each nonlinearity (neuron) in a layer of the DN determines an activation level set within its input space (i.e., the hyperplane between being active or inactive in the case of the ReLU nonlinearity).
In a hierarchical fashion, starting from the first layer, these hyperplanes are pulled back to the input space of the DN and intersect to form the regions~\citep{balestrieroGeometryDeepNetworks2019,humayunSplineCamExactVisualization2023}.

The power diagram subdivision parametrization of the DN geometry, instead views the geometry as a hierarchical intersection of power diagrams~\cite{balestrieroGeometryDeepNetworks2019}.

\begin{definition}\label{def:power_diagrams}
    Given a collection of $Q$ centroid-radius pairs $\left\{\left(\vmu_q,\tau_q\right)\right\}_{q=1}^{Q}\subseteq\mathbb{R}^d\times\mathbb{R}$, a power diagram tessellates $\mathbb{R}^d$ into $Q$ disjoint tiles $\Omega=\{\omega_1,\dots,\omega_Q\}$ such that $\cup_{q=1}^Q \omega_q=\mathbb{R}^d$,
    with each tile given by
    \begin{equation}\label{eq:power_diagram}
        \omega_{q}=\left\{\vx\in\mathbb{R}^d:q = \argmin_{q' \in \{1,\dots,Q\}}\left(\left\Vert \vx - \vmu_{q'} \right\Vert_2^2 - \tau_{q'}\right)\right\}.
    \end{equation}
\end{definition}

The distance minimized in (\ref{eq:power_diagram}) is called the Laguerre distance \cite{imaiVoronoiDiagramLaguerre1985}, and differs from the Euclidean distance only in the addition of the weighting provided by the radius parameter.

The power diagram subdivision induced by the DN is then constructed recursively through layer-wise power diagrams.
The first layer of the DN partitions the input space $\mathbb{R}^{d^{(1)}}$ as $\Omega^{(1)}$ through a power diagram.
The second layer of the DN then partitions the projections of each tile $\omega^{(1)}\in\mathbb{R}^{d}$ in $\mathbb{R}^{d^{(1)}}$ induced by $f^{(1)}$ as a power diagram.
These are then pulled back to $\mathbb{R}^{d}$ to yield a finer partition of the input space $\Omega^{(1\leftarrow2)}$.
This partition is an example of a power diagram subdivision.
Continuing sequentially for each layer completes the power diagram subdivision $\Omega$ of the DN~\cite{balestrieroGeometryDeepNetworks2019}.

Just like for a regular power diagram, we can also associate each region in a power diagram subdivision with a {\em centroid} and {\em radius}.
Despite each tile in a power diagram subdivision being defined implicitly through a recursive combinatorial intersection of half-spaces, we will show that its centroid and radius are defined explicitly and computable independently of other regions.

We focus on centroids and how they can be analyzed to understand the {\em features} of a DN, though the radii are also of considerable interest.
Indeed, it is the presence of the radius that means that the centroid of a region in a power diagram need not be contained within the region itself.

\section{Local Centroids}\label{sec:local_centroids}

Here, we provide further illustrations of local centroids computed for state-of-the-art DNs.
In particular, we demonstrate that for randomly initialized DNs, local centroids contain no information (see \Cref{fig:saliency_map2}).
Furthermore, we show that considering DN sub-components from the input space to hidden layers facilitates the extraction of hierarchical saliency maps.
For example, in the top row of \Cref{fig:saliency_map3}, we compare local centroids computed from the input space to a hidden layer and from the input space to the output layer of a Swin-B transformer~\cite{liuSwinTransformerHierarchical2021}.
Since the windows of the aircraft are only apparent in the centroid of the full DN, it follows that the windows are a feature of the later layers of the DN.

\section{Comparison to Prior Work}\label{sec:comparison_to_literature}

\paragraph{Features.} 

Our notion of a DN feature (see \Cref{sec:features_and_circuits}) is analogous to the notion of a DN concept used in \citet{parkLinearRepresentationHypothesis2024}, which provides a rigorous theoretical characterization of the LRH. 
In \citet{parkLinearRepresentationHypothesis2024}, a DN feature is a variable that leads to a particular output when caused by a context. 
In our case, the context would correspond to the input samples that possess a given characteristic.
The variable notion of \citet{parkLinearRepresentationHypothesis2024} would then be equivalent to our requirement \emph{representation}.
Then the referenced causation on the output would be equivalent to our requirement of causing an \emph{influence}.

\paragraph{Circuits.} 
Circuits were initially introduced in \citet{olahZoomIntroductionCircuits2020} to deal with the apparent poly-semantic nature of neurons. 
That is, specific neurons were observed to trigger on seemingly semantically disjoint inputs, whereas ensembles of neurons demonstrated more reliable activation patterns. 
Instead, our notion of a circuit arises naturally as the responses of the components of a DN to a feature. 
Indeed, this response is likely to involve multiple neurons or components of a DN, given the DN's compositional construction.

\section{The Implication of the LCH on Hyperplane Geometry}\label{sec:formal_theory}

In this section, we formalize the relationship between aligned centroids and the hyperplane geometry of DNs described in Appendix \ref{sec:dn_geometry}. 
We first prove that at any given layer, centroids lying on a one-dimensional affine subspace imply that the boundaries separating their regions are strictly parallel. 

\begin{proposition}\label{prop:linear_parallel}
    At the input space of layer $\ell$, if the centroids of a sequence of adjacent linear regions lie on a one-dimensional affine subspace, then the hyperplanes forming the boundaries between these consecutive regions are strictly parallel.
\end{proposition}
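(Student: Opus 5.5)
I will use the power-diagram parametrization of Definition \ref{def:power_diagrams}. The key fact is that in a power diagram the boundary between two adjacent tiles is determined only by their centroid–radius pairs. For adjacent regions $\omega_q$ and $\omega_{q'}$ with pairs $(\vmu_q,\tau_q)$ and $(\vmu_{q'},\tau_{q'})$, the shared facet lies on the set where the Laguerre distances coincide:
\begin{equation*}
\left\Vert\vx-\vmu_q\right\Vert_2^2-\tau_q=\left\Vert\vx-\vmu_{q'}\right\Vert_2^2-\tau_{q'} .
\end{equation*}
Expanding both sides, the quadratic term $\Vert\vx\Vert_2^2$ cancels. The boundary is therefore contained in the hyperplane
\begin{equation*}
2\left(\vmu_{q'}-\vmu_q\right)^\top\vx=\left\Vert\vmu_{q'}\right\Vert_2^2-\left\Vert\vmu_q\right\Vert_2^2-\tau_{q'}+\tau_q .
\end{equation*}
Its normal vector is $\vmu_{q'}-\vmu_q$. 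This is the first step, and it is routine.

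The second step applies the hypothesis. Let $\vmu_1,\dots,\vmu_K$ be the centroids of the sequence of adjacent regions at the input of layer $\ell$, with $\omega_k$ adjacent to $\omega_{k+1}$. If they lie on a one-dimensional affine subspace $\{\vp+t\vd : t\in\mathbb{R}\}$, write $\vmu_k=\vp+t_k\vd$. Then $\vmu_{k+1}-\vmu_k=(t_{k+1}-t_k)\vd$. The two regions are distinct, and their separating set is a genuine hyperplane, so $t_{k+1}\neq t_k$. Hence every consecutive boundary has normal proportional to the same $\vd$, so all these hyperplanes are parallel.

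For ``strictly'' parallel I also need the hyperplanes to be distinct, i.e.\ to have different offsets along $\vd$. I would get this from adjacency: consecutive regions are disjoint, full-dimensional, and ordered along the sequence. So the facet between $\omega_{k}$ and $\omega_{k+1}$ and the facet between $\omega_{k+1}$ and $\omega_{k+2}$ bound the same full-dimensional region $\omega_{k+1}$ from opposite sides. Two distinct faces of a polytope cannot lie in the same supporting hyperplane, so the offsets differ.

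The main obstacle is making precise which power diagram is meant. At layer $\ell$ the regions come from the layer-wise power diagram in $\mathbb{R}^{d^{(\ell-1)}}$, and there Proposition \ref{prop:centroid_jacobian} gives centroids as $\boldsymbol{J}^\top\1$. I would therefore state the argument for a single layer's power diagram, where Definition \ref{def:power_diagrams} applies verbatim. I would also note that the argument needs the pairwise boundary formula to hold on the actual shared facet, which is exactly the adjacency assumption. Degenerate cases, such as coincident centroids or a possible sign flip of $t_{k+1}-t_k$, affect only the orientation of the normal, not parallelism, so I would dismiss them with a remark.
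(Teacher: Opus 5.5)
Your proof is correct and follows the paper's argument: the normal of each shared facet is the centroid difference $\vmu_{k+1}-\vmu_k$, and collinearity of the centroids makes every such normal a multiple of the same $\vd$. The one difference is that you derive this normal from the equal-Laguerre-distance condition, whereas the paper simply asserts it.

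The paper reads ``strictly parallel'' only as ``having exactly the same normal direction,'' so your extra step showing the hyperplanes are distinct is not needed. As written, that step also relies on an ``opposite sides'' ordering that adjacency alone does not give: the sequence $\omega_1,\omega_2,\omega_1$ has coincident boundaries. It would therefore need the regions in the sequence to be pairwise distinct.
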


\begin{proof}
    Consider a sequence of adjacent linear regions $\omega_1,\dots,\omega_k$ in the input space of layer $\ell$, separated by boundary hyperplanes $\Pi_1,\dots,\Pi_{k-1}$. 
    Let their corresponding power diagram centroids be $\vmu_1,\dots,\vmu_k$. 
    The shared boundary $\Pi_i$ between region $i$ and $i+1$ has a normal vector defined by the difference of their centroids: $\vn_i=\vmu_i-\vmu_{i+1}$.
    Assume the centroids lie on a one-dimensional affine subspace. Then, there exists a base point $\vp$ and a unit direction vector $\vd$ such that for all $i$, $\vmu_i = \vp + t_i \vd$ for some scalar $t_i$. 
    The normal vector for the boundary $\Pi_i$ is then:$$\vn_i = \vmu_i - \vmu_{i+1} = (t_i - t_{i+1})\vd.$$
    Because every normal vector $\vn_i$ is a scalar multiple of the exact same direction $\vd$, all boundary hyperplanes $\Pi_1, \dots, \Pi_{k-1}$ share the same normal direction and are therefore strictly parallel.
\end{proof}

While Proposition \ref{prop:linear_parallel} establishes parallel boundaries in the intermediate representation space of layer $\ell$, the network's overall behavior is determined by how these boundaries map back to the original input space $\mathbb{R}^d$.

The parallel hyperplanes $\Pi_1, \dots, \Pi_{k-1}$ at layer $\ell$ form a tightly packed, linear decision band. 
The boundary in the original input space corresponding to $\Pi_i$ is its pre-image under this mapping $\left\{\vx \in \mathbb{R}^d : f^{(1 \leftarrow \ell-1)}(\vx) \in \Pi_i\right\}$.
Because $f^{(1 \leftarrow \ell-1)}$ is a composition of affine transformations and non-linearities (like ReLU), it fundamentally acts by folding, stretching, and compressing the input space.

\begin{figure}[ht]
     \centering
     \begin{subfigure}[b]{0.4\columnwidth}
         \centering
         \includegraphics[width=0.8\textwidth]{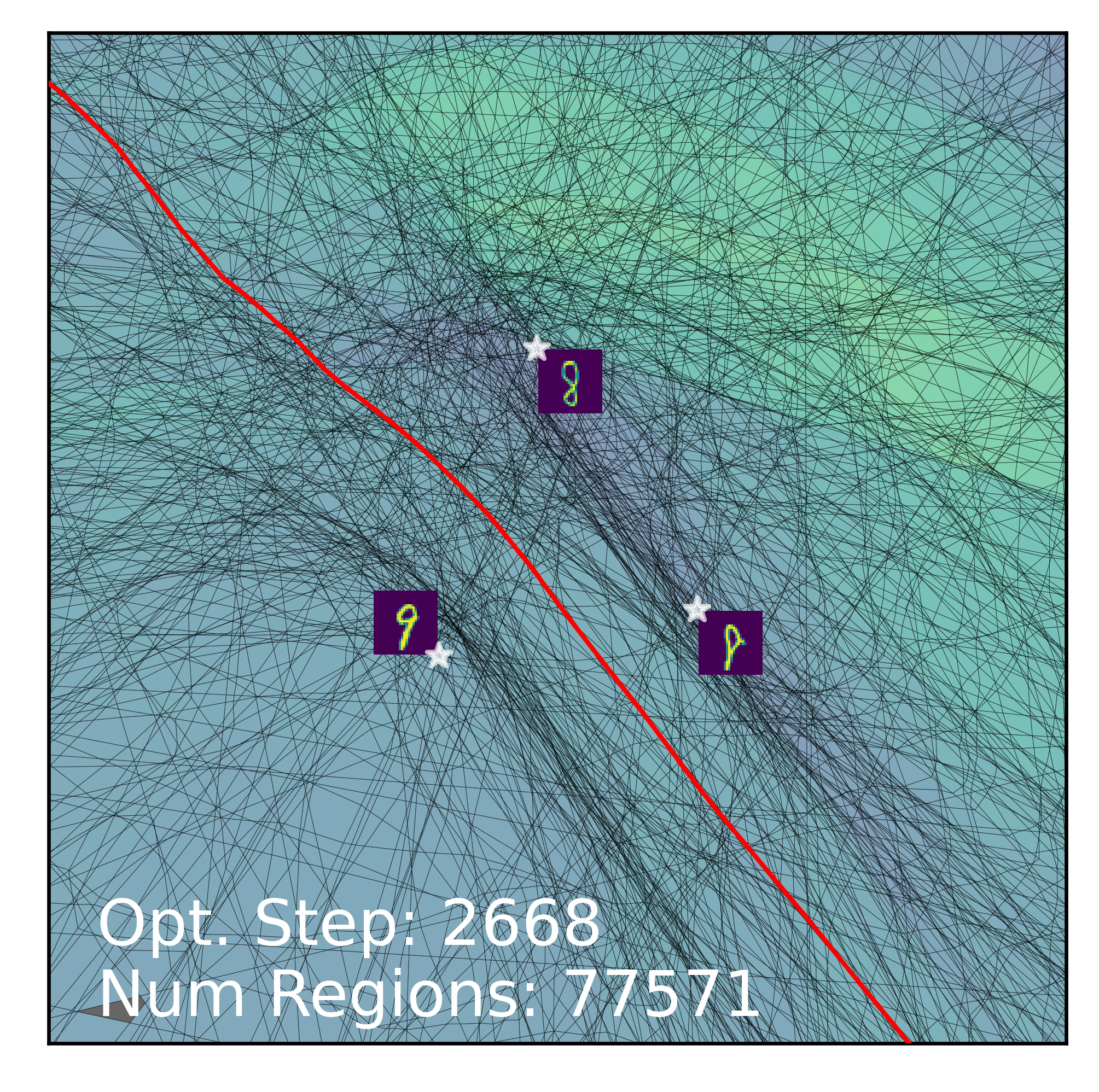}
         \caption*{Low Robustness}
         \label{fig:general_partition}
     \end{subfigure}
     \hspace{0.5em}
     \begin{subfigure}[b]{0.4\columnwidth}
         \centering
         \includegraphics[width=0.8\textwidth]{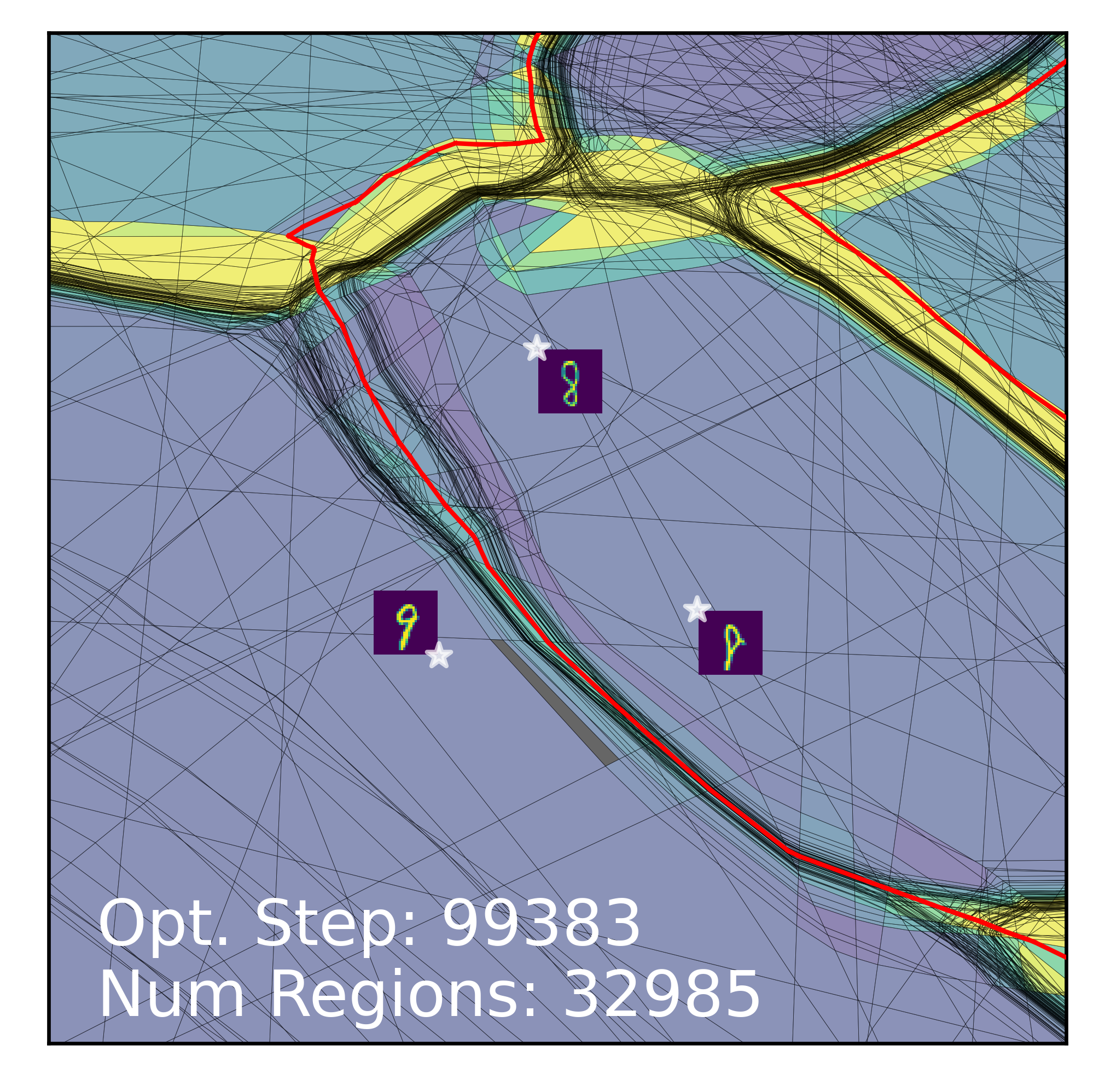}
         \caption*{High Robustness}
         \label{fig:robust_partition}
     \end{subfigure}
    \caption{
    Taken from~\citet{humayunSplineCamExactVisualization2023}, this is an illustration of the geometry of a fully connected ReLU DN as it trains on MNIST.
    Three training samples are fixed to construct a two-dimensional slice of the DN's input domain, and the linear regions that intersect this slice are then visualized.
    On the \textbf{left} is the geometry of the DN when it has generalized but exhibits low levels of robustness.
    On the \textbf{right} is the geometry of the DN when it exhibits the additional property of robustness.
    }
    \label{fig:splinecam}
\end{figure}

When we ``pull back'' the parallel hyperplanes $\Pi_i$ through this mapping, a single straight hyperplane in layer $\ell$ pulls back to a continuous, piecewise linear surface in the input space. 
Each time the pre-image crosses an activation boundary from an earlier layer, the surface bends, allowing it to approximate macroscopic curves.
Because the hyperplanes $\Pi_1, \dots, \Pi_{k-1}$ are parallel and sequentially packed in layer $\ell$, their pre-images map to a dense bundle of piecewise linear surfaces in the input space.
In regions where the mapping $f^{(1 \leftarrow \ell-1)}$ highly compresses the input manifold, this bundle of boundaries packs densely together. 
By accumulating these intricately folded, tightly packed linear regions, the network effectively constructs smooth, arbitrarily curved decision boundaries to separate complex features in the input data.

Consequently, the linear alignment of centroids at an intermediate layer is the geometric mechanism by which a DN coordinates a massive accumulation of linear regions to trace out curved, semantically meaningful feature boundaries in the original input space.
An example of this is shown in Figure \ref{fig:splinecam}.

\section{Analyzing Deep Networks Trained to Classify Interiors of Polygons}\label{sec:other_polygons}

Here we continue the analysis of the DN training to classify the interior of a star-shaped polygon, as depicted in the third and fourth panels of \Cref{fig:1}.

First, we can observe in Figure \ref{fig:centroid_progression} that the linearity of the centroids of \Cref{fig:1} emerges gradually through training. 
At initialization, the centroids have a similar arrangement to the input samples because the DN is randomly initialized. 
However, as training progresses, we observe that the centroids slowly migrate and align themselves. 
In particular, we can see the alignment of the centroids manifest before they reach their final positions.

\begin{figure}[h]
     \centering
     \begin{subfigure}[b]{0.19\textwidth}
         \centering
         \includegraphics[width=\textwidth]{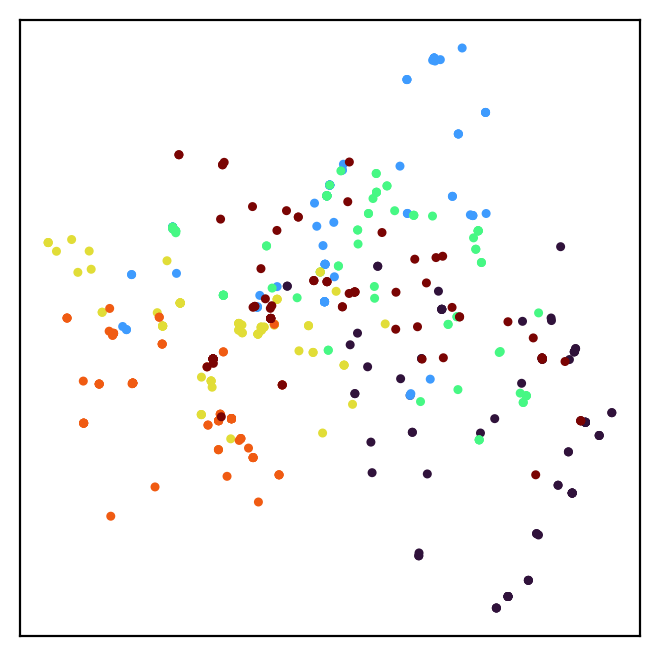}
         \caption*{Epoch $0$}
     \end{subfigure}
     \hfill
     \begin{subfigure}[b]{0.19\textwidth}
         \centering
         \includegraphics[width=\textwidth]{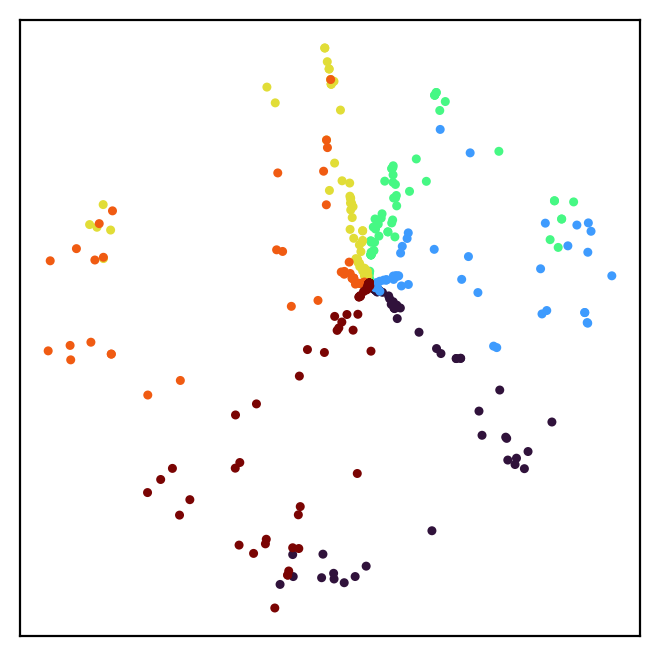}
         \caption*{Epoch $18$}
     \end{subfigure}
     \hfill
     \begin{subfigure}[b]{0.19\textwidth}
         \centering
         \includegraphics[width=\textwidth]{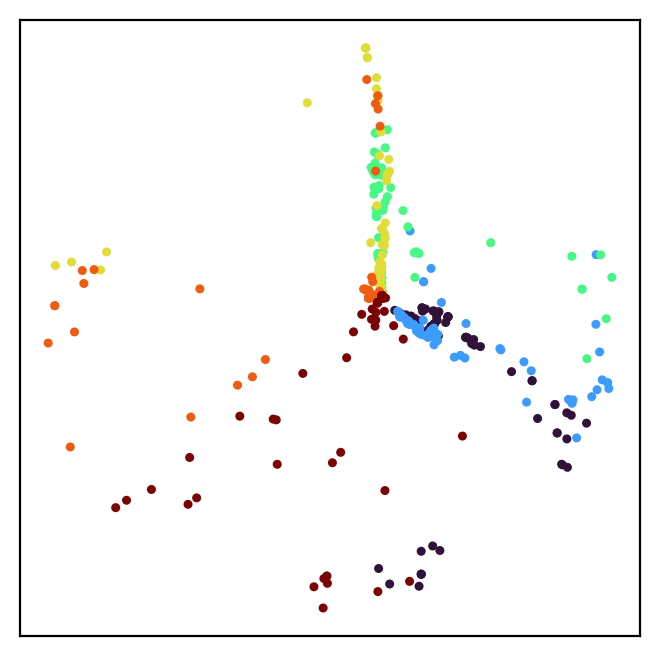}
         \caption*{Epoch $27$}
     \end{subfigure}
     \hfill
     \begin{subfigure}[b]{0.19\textwidth}
         \centering
         \includegraphics[width=\textwidth]{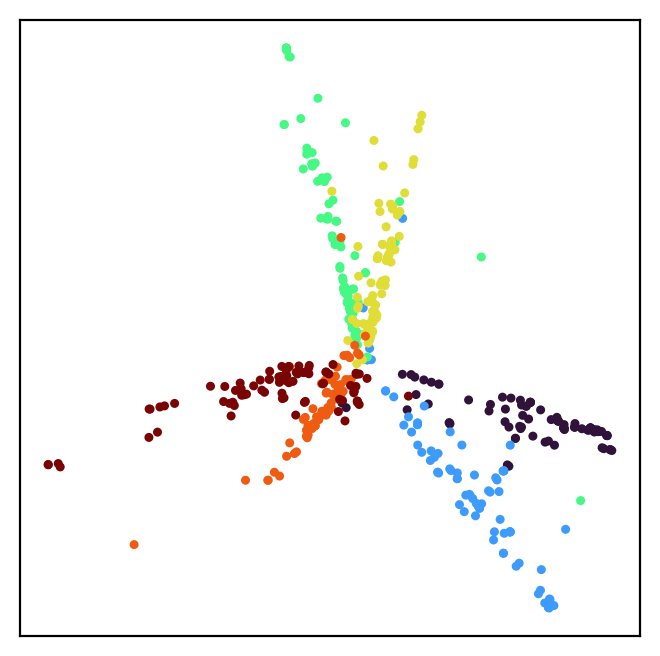}
         \caption*{Epoch $147$}
     \end{subfigure}
     \hfill
     \begin{subfigure}[b]{0.19\textwidth}
         \centering
         \includegraphics[width=\textwidth]{figures/star_edge_relu_centroids.png}
         \caption*{Epoch $512$}
     \end{subfigure}
    \caption{
    Throughout the training of the DN of the bottom row \Cref{fig:1}, we tracked the DN centroids of the input samples highlighted in the bottom left panel \Cref{fig:1}.}
    \label{fig:centroid_progression}
\end{figure}

Second, as expected from the discussion of \Cref{sec:formal_theory}, in the second panel of Figure \ref{fig:toy_example} we see that the trained DN geometry exhibits linear regions with roughly parallel boundaries along the polygon's edges.
Moreover, in the third and fourth panels of Figure \ref{fig:toy_example}, we observe the activation level-sets of the nonlinearities of the second and third hidden layers of this DN.
It is clear that the full identification of the interior and exterior of the polygon is achieved in the third layer, as the nonlinearities delineate the polygon's edges. 
The second-layer nonlinearities capture the coarser features relating to the three exterior sectors created by the star shape.
This is evidenced by each nonlinearity forming along one of the polygon's outer edges.

\begin{figure}[ht]
     \centering
     \begin{subfigure}[b]{0.24\columnwidth}
         \centering
         \includegraphics[width=\textwidth]{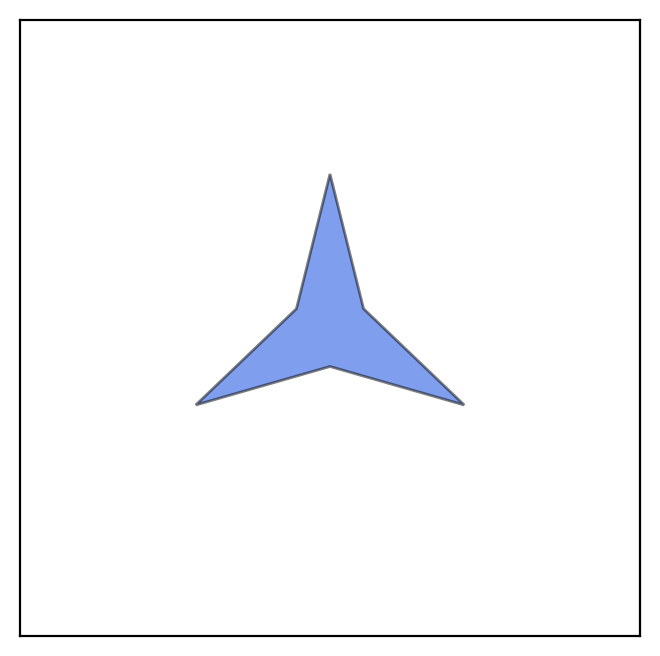}
         \caption*{Input Polygon}
         \label{fig:star_polygon}
     \end{subfigure}
     \begin{subfigure}[b]{0.24\columnwidth}
         \centering
         \includegraphics[width=\textwidth]{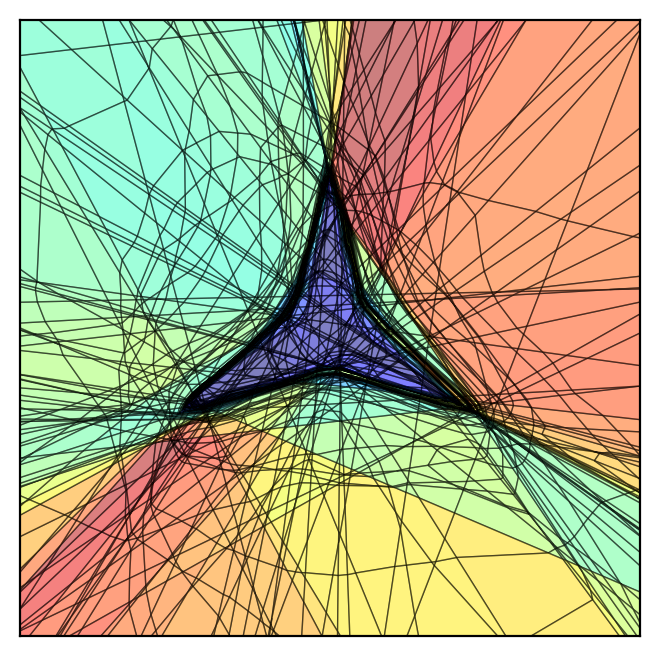}
         \caption*{DN Geometry}
         \label{fig:partition}
     \end{subfigure}
     \begin{subfigure}[b]{0.24\columnwidth}
         \centering
         \includegraphics[width=\textwidth]{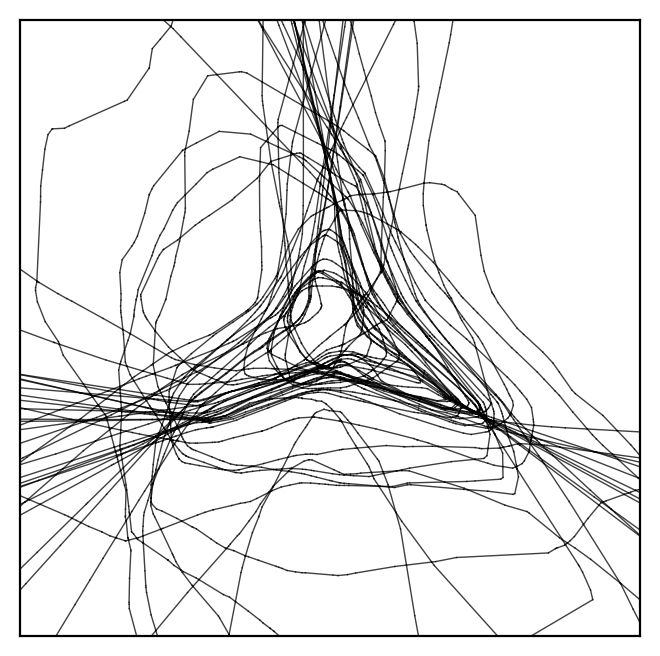}
         \caption*{$2^{\text{nd}}$ Hidden Layer}
         \label{fig:second_partition_layer}
     \end{subfigure}
     \begin{subfigure}[b]{0.24\columnwidth}
         \centering
         \includegraphics[width=\textwidth]{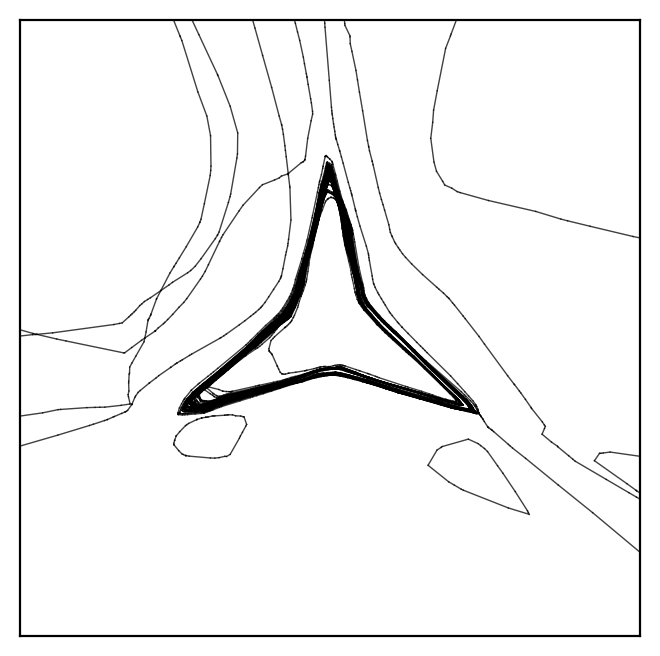}
         \caption*{$3^{\text{rd}}$ Hidden Layer}
         \label{fig:third_partition_layer}
     \end{subfigure}
    \caption{
    The activation level-sets of the nonlinearities of a DN have a structure that is related to features of a DN.
    Here we train a fully connected ReLU DN with a two-dimensional input space and a one-dimensional output space to classify the interior and exterior of the star-shaped polygon shown in the first panel.
    In the second panel, we visualize the geometry of the trained DN.
    The color of the linear regions represents the Frobenius norm of the affine transformation parameter $\mA_{\omega}$ operating on that linear region.
    We then separate the hyperplanes corresponding to the nonlinearities of the second layer (third panel) and those of the third layer (fourth panel).
    }
    \label{fig:toy_example}
\end{figure}

Third, in addition to centroids possessing a linear structure under the LCH, it is evident from \Cref{fig:1} that this structure is semantically coherent.
Suggesting that applications of these topological point-cloud analyses may be particularly fruitful under the LCH.
To aid the study of centroids as point clouds, we soften DNs using CPA nonlinearities (e.g., ReLU) by replacing them with smooth approximations (e.g., GELU). 
The GELU nonlinearity~\citep{hendrycksGaussianErrorLinear2023} belongs to the swish family of nonlinearities \citep{ramachandranSearchingActivationFunctions2017}, which are theoretically known to provide an appropriate softening of a ReLU DN's geometry \citep{balestrieroHardSoftUnderstanding2018}.
In the first panel of Figure \ref{fig:point_clouds}, we show that this smooth approximation does not impact the structure of the centroids.
In the second panel of \Cref{fig:point_clouds}, we consider a t-SNE embedding \citep{maatenVisualizingDataUsing2008} of these softened centroids at the second hidden layer of the DN.
We observe clustering of centroids by the sector of the input domain in which the input samples were located.
This corroborates our prior analysis using the second hidden layer's geometry.
Moreover, it demonstrates how the LCH can be applied hierarchically.
In \Cref{fig:1}, centroids were computed across the entire DN to understand the finest features of the DN, whereas in Figure \ref{fig:toy_example}, centroids were computed across a single layer to identify coarser features.
Similar analyses under the LRH are challenging, as there is no notion of latent activations across different scales of sub-components~\cite{balaganskyMechanisticPermutabilityMatch2025}.

Fourth, we can analyze the neurons of the DN using Equation (\ref{eq:neuron_attribution}).
In the third panel of Figure \ref{fig:point_clouds}, we see that the neurons of the third hidden layer have an encompassing effect on the entire boundary of the polygon,

\begin{figure}[ht]
     \centering
     \begin{subfigure}[b]{0.32\columnwidth}
         \centering
         \includegraphics[width=\textwidth]{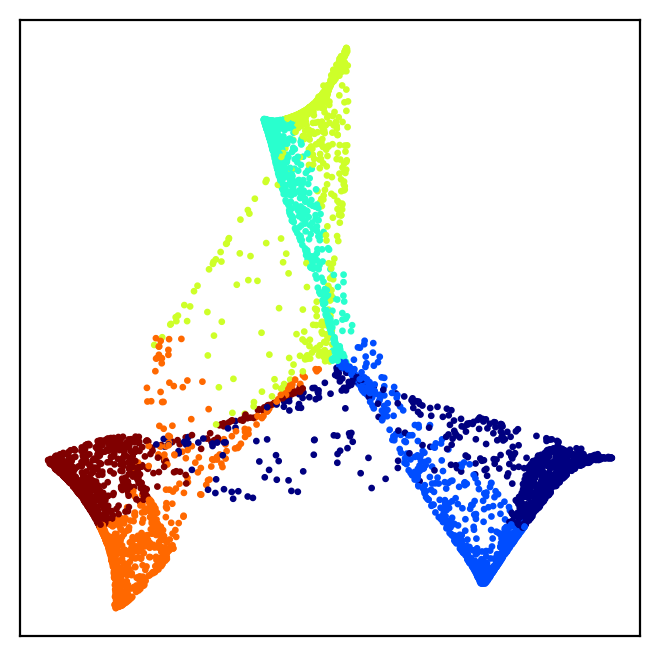}
         \caption*{GELU Centroids}
         \label{fig:star_grid_gelu}
     \end{subfigure}
     \begin{subfigure}[b]{0.32\columnwidth}
         \centering
         \includegraphics[width=\textwidth]{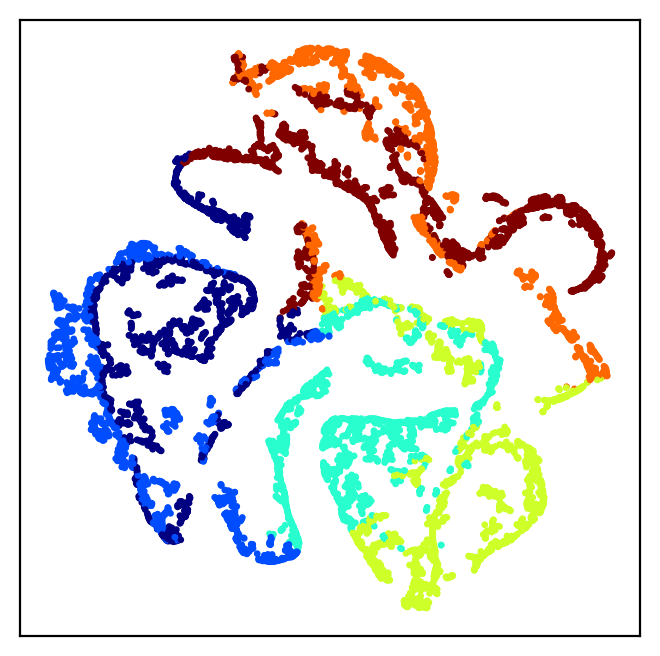}
         \caption*{$2^\text{nd}$ Layer Centroids t-SNE}
         \label{fig:tsne}
     \end{subfigure}
     \hspace{0.5em}
     \begin{subfigure}[b]{0.32\columnwidth}
         \centering
         \includegraphics[width=\textwidth]{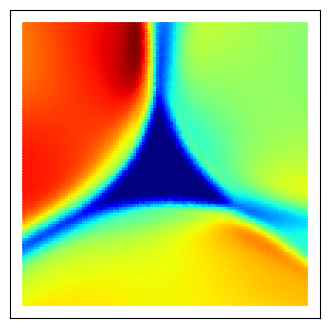}
         \caption*{$s^{(i,3)}$}
         \label{fig:neuron_layer3}
     \end{subfigure}
    \caption{
    In the first panel, we show that replacing the ReLU nonlinearities with GELU nonlinearities for the DN of Figure \ref{fig:toy_example} does not change the structure of the centroids significantly.
    In the second panel, we show, using t-SNE analysis of the second-layer centroids of the DN in Figure \ref{fig:toy_example}, that the second layer encodes features corresponding to the three external sectors of the star-shaped polygon.
    In the third panel, we show that the third layer of the DN of Figure \ref{fig:toy_example} captures the inter-exterior feature by demonstrating that neurons in this layer are sensitive (as per Equation (\ref{eq:neuron_attribution})).
    More specifically, for a grid of points in the input space, we apply \Cref{eq:neuron_attribution} to a small neighborhood around each point and visualize the resulting value as a heatmap.
    }
    \label{fig:point_clouds}
\end{figure}

In addition to the star-shaped polygon considered in the main text, in \Cref{fig:other_shapes} we corroborate the observed patterns when the input distribution is a bowtie-shaped and reuleaux-shaped polygon.



Moreover, we can show that starting from a non CPA DN, we can repeat the same investigations and derive similar conclusions.
For example, as showing in Figure \ref{fig:gelu_from_scratch}, when replacing the nonlinearity back to a ReLU we can observe its geometry using SplineCam \citep{humayunSplineCamExactVisualization2023} and see that nonlinearities still align along the boundary of the polygon, when we observe the centroids of input samples we see the same linear structures, and the influence of pruning neurons on the centroids is still effective as a neuron attribution metric.

\begin{figure}[h]
     \centering
     \begin{subfigure}[b]{0.24\textwidth}
         \centering
         \includegraphics[width=\textwidth]{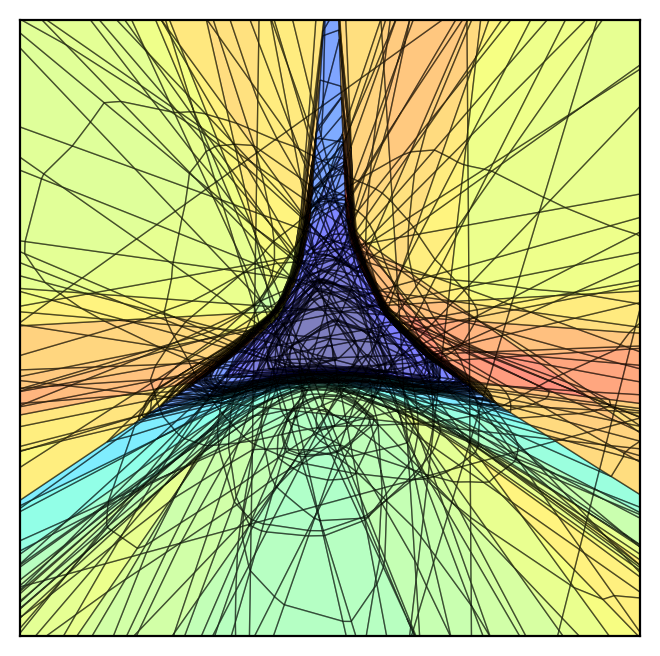}
         \caption*{DN Geometry}
     \end{subfigure}
     \begin{subfigure}[b]{0.24\textwidth}
         \centering
         \includegraphics[width=\textwidth]{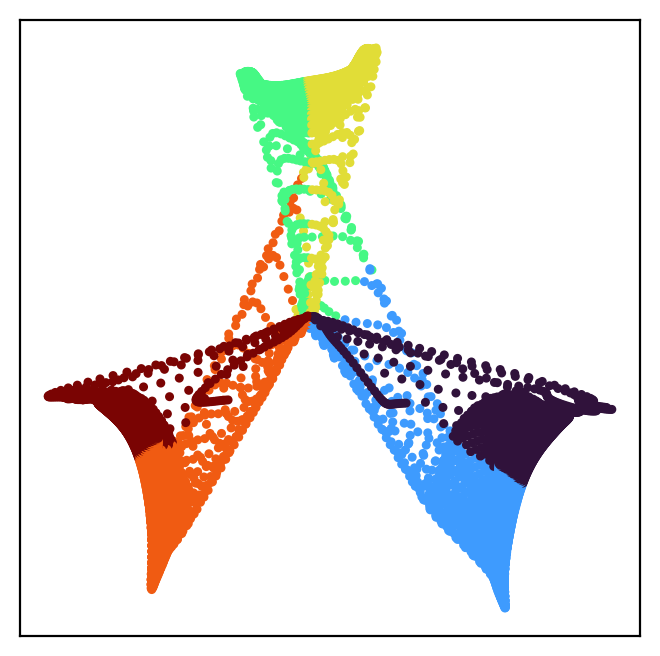}
         \caption*{Centroids}
     \end{subfigure}
     \begin{subfigure}[b]{0.235\textwidth}
         \centering
         \includegraphics[width=\textwidth]{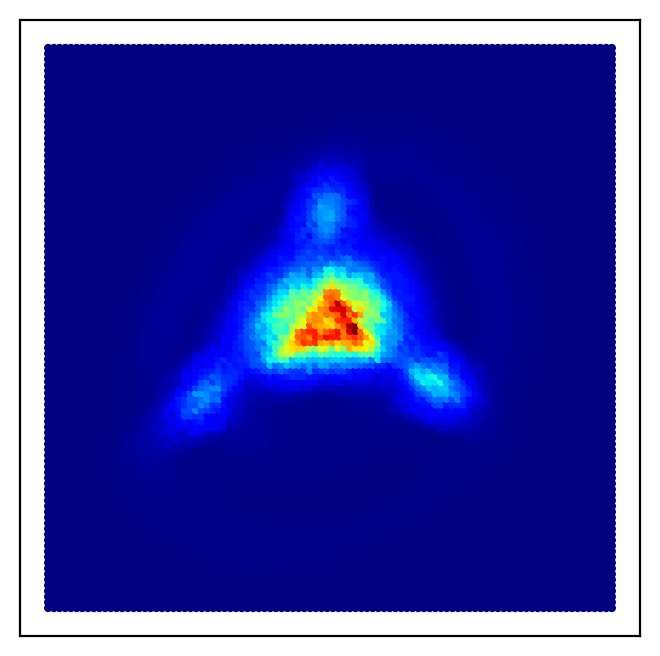}
         \caption*{$s^{(i,2)}$}
     \end{subfigure}
     \begin{subfigure}[b]{0.235\textwidth}
         \centering
         \includegraphics[width=\textwidth]{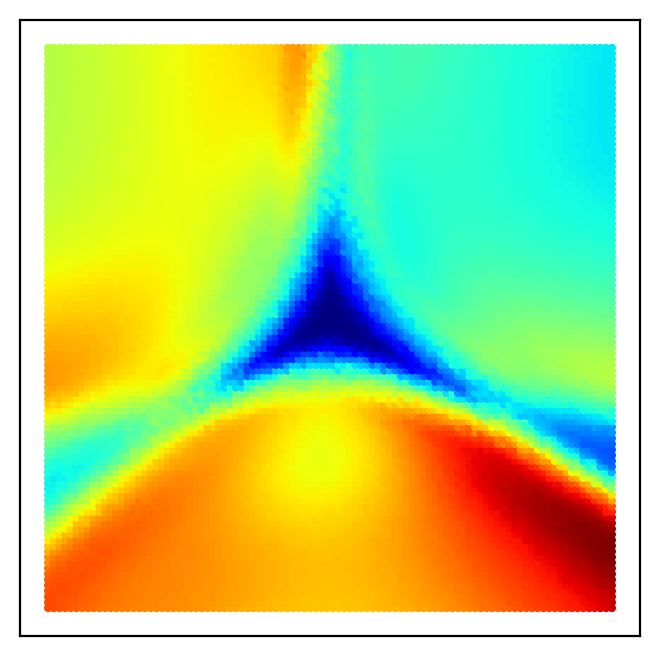}
         \caption*{$s^{(i,3)}$}
     \end{subfigure}
    \caption{
    Here we train a DN in the same manner as the one considered in Figure \ref{fig:toy_example}, except we use the GELU nonlinearity. 
    In the first panel, we replace the nonlinearities with ReLU such that we can use SplineCam to visualize its geometry. 
    In the second panel, we visualize the centroids from input samples. 
    In the third and fourth panels, we consider the sensitivities of centroids when pruning neurons from the second and third layers, respectively.}
    \label{fig:gelu_from_scratch}
\end{figure}

\section{Qualitative Analysis of Feature Dictionaries}\label{sec:qualitative}

Here, we provide a qualitative analysis of the feature dictionaries constructed in \Cref{sec:saes}.
To address the concern that Imagenette may provide a relatively easy setting, we train sparse autoencoders on a subset of ImageNet sampled from ten random dog breeds.

In the first panel of Figure \ref{fig:qualitative_saes}, we can see that even in this more challenging setting, the features learned under LCH generalize better, just like in Figure \ref{fig:centroid_saes}.

Moreover, with the right panel of Figure \ref{fig:qualitative_saes}, we can see qualitatively that the features identified using LCH are more semantically coherent.
Here, we identify similar samples from the test set by computing the Jaccard similarity of their feature activation patterns.
We create a grid of images, with the first image as a fixed sample and the remaining six as its closest neighbors by Jaccard similarity.
In both cases, the features identify the same dog breed; however, only under LCH do we consistently see a dog with the same head position.

\begin{figure}[ht]
    \centering
    \begin{minipage}[c]{0.4\textwidth}
        \centering
        \begin{tikzpicture}
        \begin{groupplot}[
            group style={
                group size=1 by 1,
                horizontal sep=1.4cm,
            },
            width=0.9\linewidth,
            height=4.5cm,
            grid=major,
            grid style={dashed, gray!30},
            legend style={font=\scriptsize, cells={anchor=west}},
            tick label style={font=\scriptsize},
            label style={font=\scriptsize},
            title style={font=\bfseries}
        ]
        \nextgroupplot[
            xlabel={K},
            ylabel={Probe Accuracy},
            xmode=log,
            log basis x=2,
            xtick={8,16,32,64},
            xticklabels={8,16,32,64},
            ymin=55, ymax=75,
            legend pos=north west,
        ]
        \addplot[color=myblue, mark=*, thick] 
            table[x=k, y=Centroids, col sep=comma] {data/probe_accuracy_vs_sparsity-dogs.csv};
        \addlegendentry{LCH}

        \addplot[color=myorange, mark=square*, thick, dashed] 
            table[x=k, y=Latents, col sep=comma] {data/probe_accuracy_vs_sparsity-dogs.csv};
        \addlegendentry{LRH}

        \end{groupplot}
        \end{tikzpicture}
    \end{minipage}%
    \hfill
    \begin{minipage}[c]{0.55\textwidth}
        \centering
        \begin{subfigure}[b]{\linewidth}
            \caption*{LRH}
            \includegraphics[width=\linewidth]{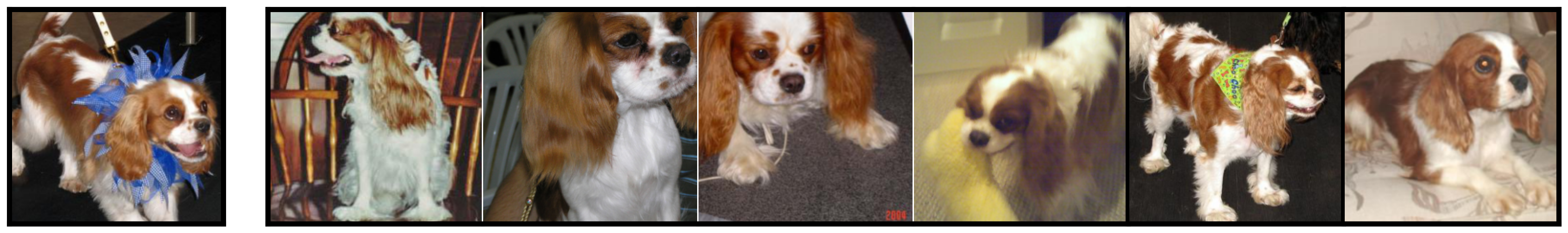}
        \end{subfigure}
        \begin{subfigure}[b]{\linewidth}
            \caption*{LCH}
            \includegraphics[width=\linewidth]{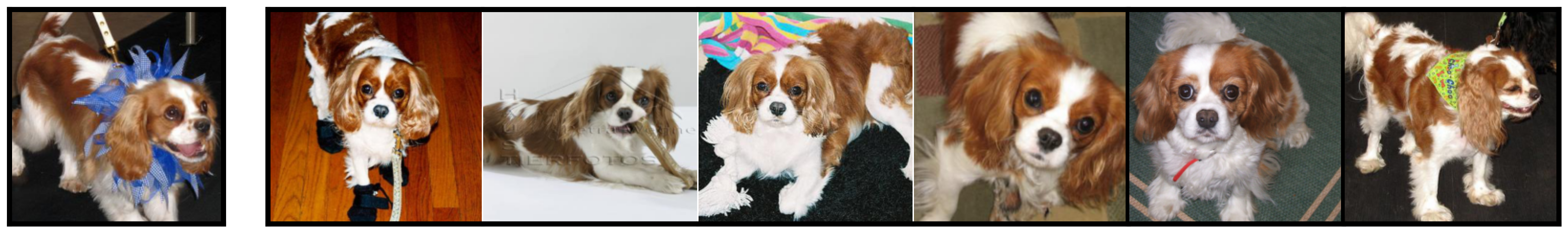}
        \end{subfigure}
    \end{minipage}
    
    \caption{
        Here we train sparse autoencoders in the same ways as Figure \ref{fig:centroid_saes}, but on a subset of ImageNet~\cite{krizhevskyImageNetClassificationDeep2012} containing ten random dog breed classes, which we split into a train and test set.
        In the first panel, we evaluate the sparse autoencoders by training linear probes on the feature activation of the training set and measuring their accuracy on the test set.
        In the right panel, we sample from the test set and compute the Jaccard similarity between its feature activation pattern and those of the other samples in the test set.
        We visualize the sample in the left image in the grid, then identify the six most similar samples using the other images in the grid.
    }
    \label{fig:qualitative_saes}
\end{figure}

\section{Validating the Platonic Representation Hypothesis}\label{sec:prh}

The Platonic Representation Hypothesis (PRH) states that as DNs become larger and more capable, the features they form converge~\citep{huh2024position}.
With Figure \ref{fig:prh}, we demonstrate that analyzing centroids in a similar manner also supports the PRH. In Figure \ref{fig:prh}, we adopt the methodology of~\citet{huh2024position} and compare the cosine similarities of activations (i.e., latent activations or centroids) computed across layers of BLOOM language models~\cite{workshop2023bloom176bparameteropenaccessmultilingual} of increasing capacity to those computed across the DINOv2 vision transformer~\citep{oquabDINOv2LearningRobust2024}.
As expected under the PRH, the cosine similarity of activations and model capacity is positively related.

\begin{figure}[ht]
    \centering
    \includegraphics[width=0.5\linewidth]{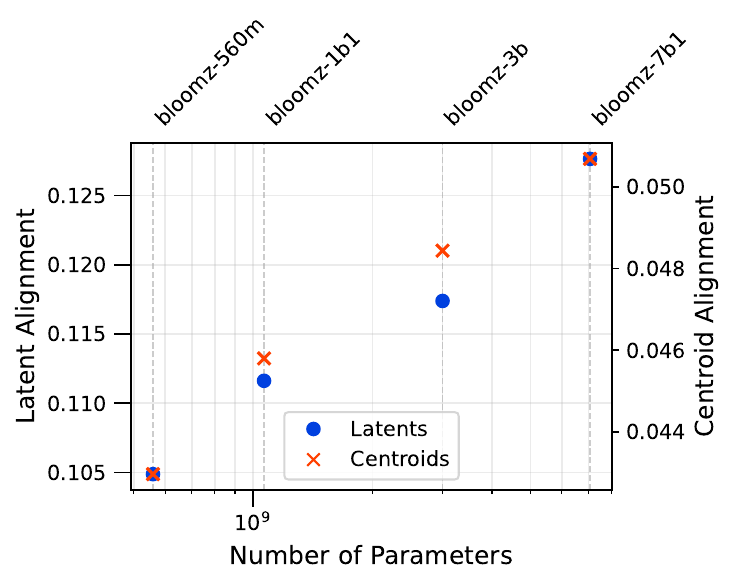}
    \caption{
    The Platonic Representation Hypothesis is validated by observing the cosine similarities between the centroids of DNs of different capacities.
    We compare the alignment of centroids from the BLOOM language model to the centroids of the DINOv2 vision model.}
    \label{fig:prh}
\end{figure}

A limitation of this analysis is that it considers only the maximum cosine similarities among all possible activations of one DN and all possible activations of another. 
It does not account for how the full spectrum of features in one model aligns with those of another.
This is rectified by our analysis shown in the third panel of Figure \ref{fig:centroid_saes}, which demonstrates that the LCH more faithfully captures this hypothesis.

\section{Computational Requirements.}\label{sec:compute_requirements}

A valid concern with exploring the LCH is the computational burden it imposes on interpretability, as it requires evaluating the DN's Jacobians. 
Fortunately, this interrogation only requires considering Jacobian vector products (see Proposition \ref{prop:centroid_jacobian}), which are significantly cheaper to compute in common computational frameworks. 
Furthermore, the analysis of centroids often focuses on a relatively small component of the DN.
Thus, this computation would appear relatively insignificant compared to processing the entire DN. 
In this section, we empirically quantify the computational burden of using centroids rather than latents in our main experiments in \Cref{sec:experiments}.

\paragraph{FashionMNIST Color Correlation.}

This experiment took approximately 2 hours on a Quadro RTX 8000.
For each correlation value (six in total), five separate DNs were trained.
Two linear probes were then trained to classify the color feature, one using intermediate activations and the other using centroids.

The DN comprised a three-layer convolutional feature extractor followed by a two-layer multi-layer perceptron classifier.
Intermediate activations were extracted from the input to the classifier, and centroids were computed using the input-output Jacobian of the classifier.

The DN was trained for 10 epochs using the Adam~\cite{kingmaAdamMethodStochastic2017} optimizer at a learning rate of $0.001$.
The linear probe was trained for 3 epochs, again with the Adam optimizer at a learning rate of $0.001$.

\paragraph{DINO Feature Extraction.}

For this experiment, we find almost no difference in the time necessary to extract centroids.
It only takes $8.7$ seconds compared to extracting intermediate representations, which takes $7.8$ seconds.
After these vectors are extracted, the computational pipeline is identical when using centroids or intermediate activations. 

In the first panel of Figure \ref{fig:centroid_saes}, we train TopK sparse autoencoders~\cite{gaoScalingEvaluatingSparse2025} on all token positions of DINOv2 and sparsity values $8$, $16$, $32$, and $64$.
Training each sparse autoencoder for 40 epochs takes around 1 hour on an NVIDIA TITANX.
In the second panel of Figure \ref{fig:centroid_saes}, we consider the sparse autoencoders with sparsity value $32$.

In the third of Figure \ref{fig:centroid_saes}, we consider similar sparse autoencoders but trained on only the \textit{cls} token of DINOv2/v3.
Training each of these sparse autoencoders takes about 2 minutes on an NVIDIA TITANX.

\paragraph{GPT2 Circuit Discovery.}

Although there is no direct analog of this experiment with latent activations, we can still argue that the computational burden is relatively benign. 
In particular, since we only compute centroids across the multi-layer perceptron block of the thirty-first layer, we only need to consider the Jacobian vector product for this component. 
This can be done by storing the gradients from a forward pass across this block, which, relative to performing a forward pass across the model, is insignificant.
This experiment takes less than 1 hour on an NVIDIA TITANX.

\paragraph{Llama-3.1-8B Probes.}

To implement this experiment, we use the setup of \url{https://github.com/saprmarks/geometry-of-truth}.
The only difference involves the extraction of centroids in addition to the extraction of intermediate activations.
Extracting centroids took $2329$ seconds compared to $470$ seconds for intermediate activations.




\section{Robustness of Neuron Attribution with Centroids}\label{sec:neuron_attribution_robustness}

In order for \Cref{eq:neuron_attribution} to prove useful as a tool for interpreting DNs, it is essential that it is robust in its application.
For example, \Cref{eq:neuron_attribution} ought not be sensitive to the neighborhood $\mathcal{N}$ chosen.
Furthermore, since in practice we can only approximate \Cref{eq:neuron_attribution} by taking a finite sample of points from $\mathcal{N}$, it is important that \Cref{eq:neuron_attribution} has low variance in relation to this finite sample.
In \Cref{fig:neuron_attribution_robustness}, we test both these properties for the experiment of \Cref{fig:circuit_discovery}. In the left plot, we observe that attribution values have a low variance when a finite sample is used to approximate the neighborhood $\mathcal{N}$.
In the right plot, we observe that the percentile of a particular neuron within the DN layer is stable across different neighborhood sizes.
This ensures that conclusions derived from \Cref{eq:neuron_attribution} are robust.

\section{Model Licenses}\label{sec:licenses}

Swin-B~\cite{liuSwinTransformerHierarchical2021}, ResNet~\cite{heDeepResidualLearning2016} and ConvNeXt-L~\cite{liuConvNet2020s2022} are obtained through PyTorch~\cite{torchvision2016} and RobustBench~\cite{croceRobustBenchStandardizedAdversarial2020} under an MIT license.

DINOv2~\cite{oquabDINOv2LearningRobust2024} is used under an Apache 2.0 license, DINOv3~\cite{simeoniDINOv32025} under the DINOv3-license, the Bloom language models~\cite{workshop2023bloom176bparameteropenaccessmultilingual} under the BigScience RAIL license, Llama-3.1-8B under the Llama3.1 license, GPT-2~\cite{radfordLanguageModelsAre2019} is obtained through the nano-GPT\footnote{\url{https://github.com/karpathy/nanogpt}} repository under an MIT license.

\begin{figure}[p]
     \centering

     \begin{subfigure}[b]{0.24\textwidth}
         \centering
         \caption*{\tiny\centering Input\\Sample}
         \includegraphics[width=\textwidth]{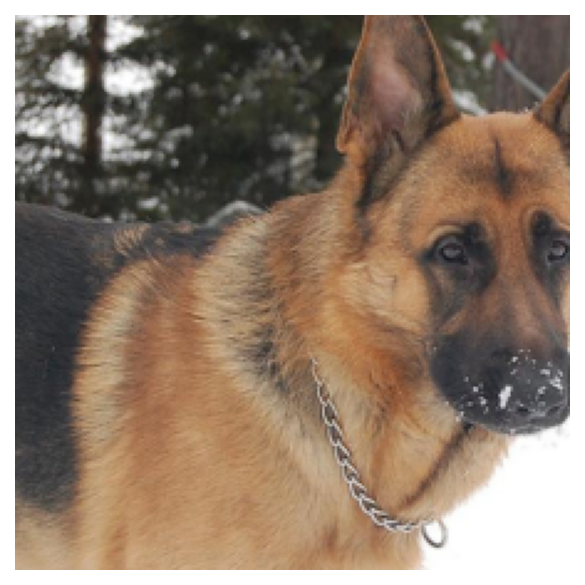}
     \end{subfigure}
     \hfill
     \begin{subfigure}[b]{0.24\textwidth}
         \centering
         \caption*{\tiny\centering Local Centroid\\(Input Space to Hidden Layer)}
         \includegraphics[width=\textwidth]{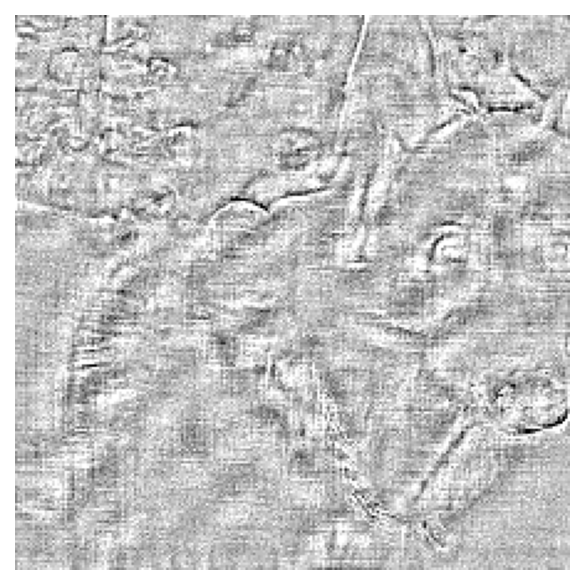}
     \end{subfigure}
     \begin{subfigure}[b]{0.24\textwidth}
         \centering
         \caption*{\tiny\centering Local Centroid\\(Input Space to Output)}
         \includegraphics[width=\textwidth]{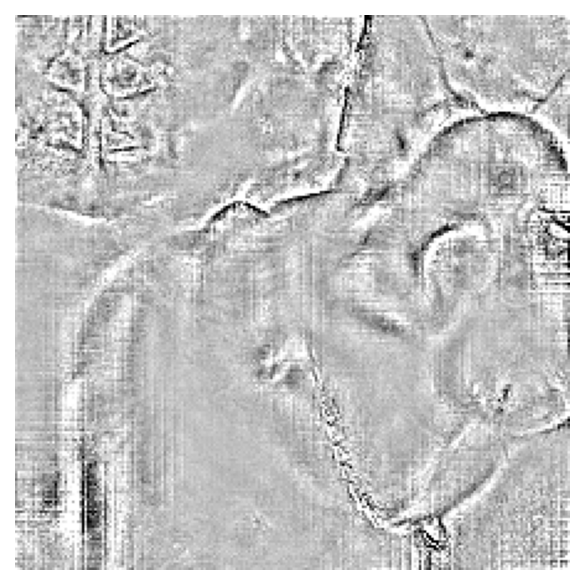}
     \end{subfigure}
     \hfill
     \begin{subfigure}[b]{0.24\textwidth}
         \centering
         \caption*{\tiny\centering Local Centroid\\(At Random Initialization)}
         \includegraphics[width=\textwidth]{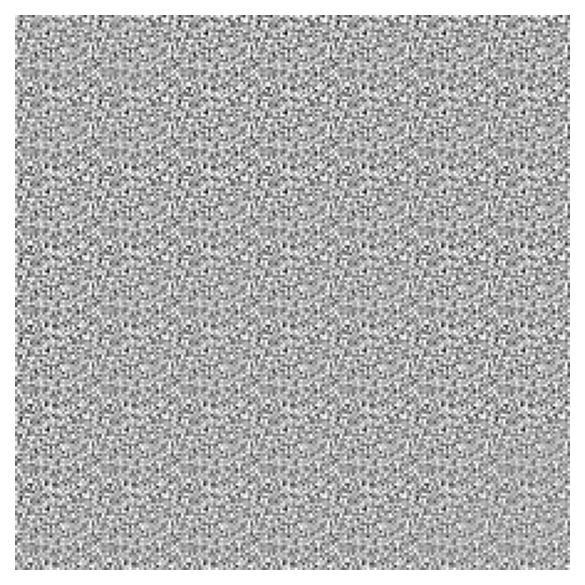}
     \end{subfigure}

     \begin{subfigure}[b]{0.24\textwidth}
         \centering
         \includegraphics[width=\textwidth]{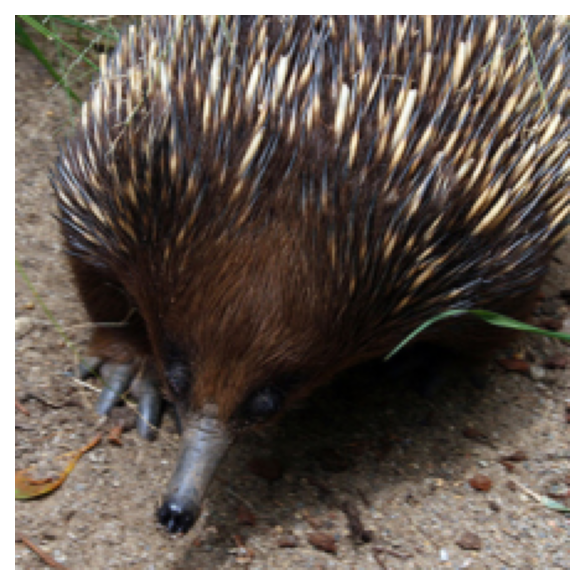}
     \end{subfigure}
     \hfill
     \begin{subfigure}[b]{0.24\textwidth}
         \centering
         \includegraphics[width=\textwidth]{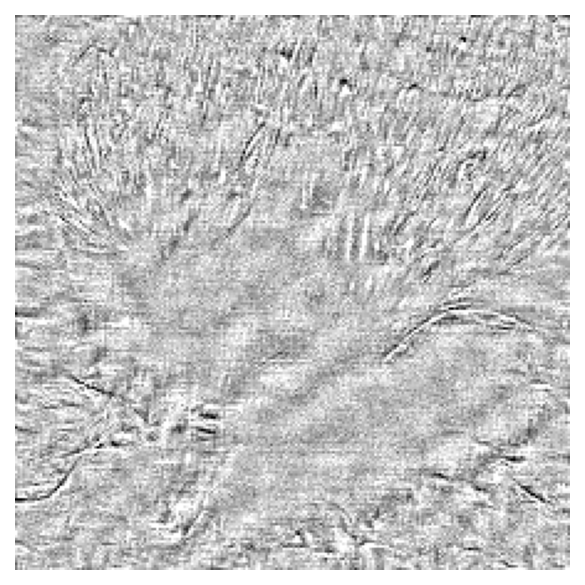}
     \end{subfigure}
     \begin{subfigure}[b]{0.24\textwidth}
         \centering
         \includegraphics[width=\textwidth]{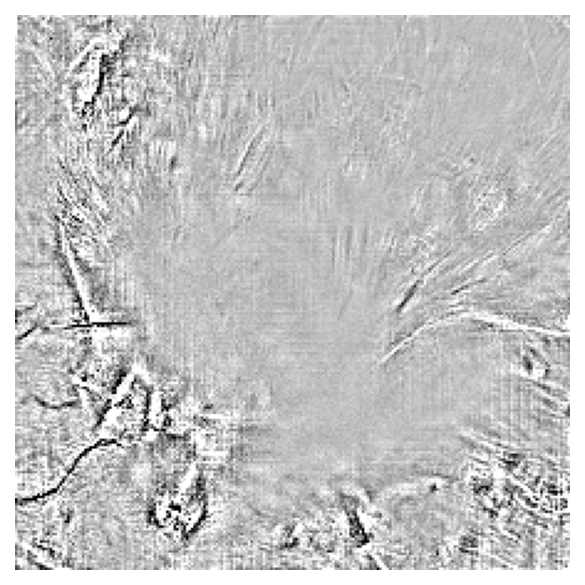}
     \end{subfigure}
     \hfill
     \begin{subfigure}[b]{0.24\textwidth}
         \centering
         \includegraphics[width=\textwidth]{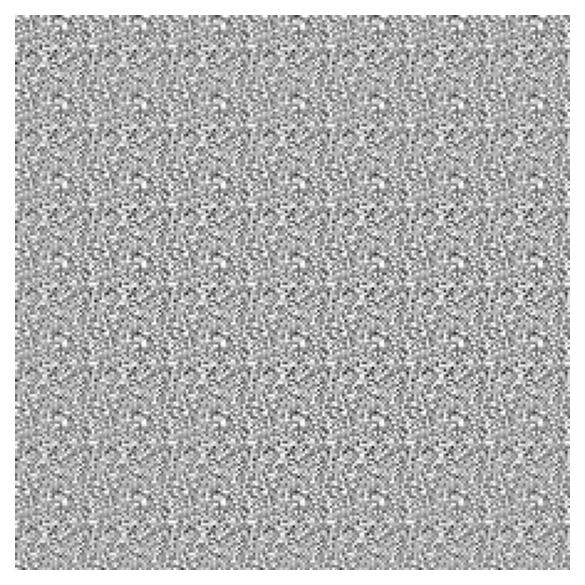}
     \end{subfigure}

     \begin{subfigure}[b]{0.24\textwidth}
         \centering
         \includegraphics[width=\textwidth]{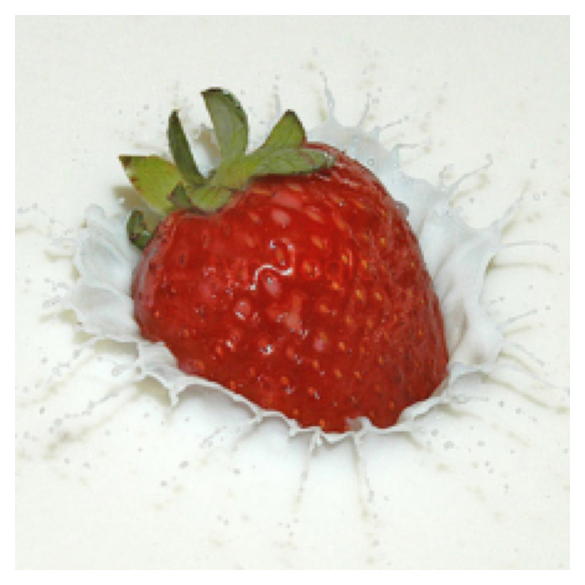}
     \end{subfigure}
     \hfill
     \begin{subfigure}[b]{0.24\textwidth}
         \centering
         \includegraphics[width=\textwidth]{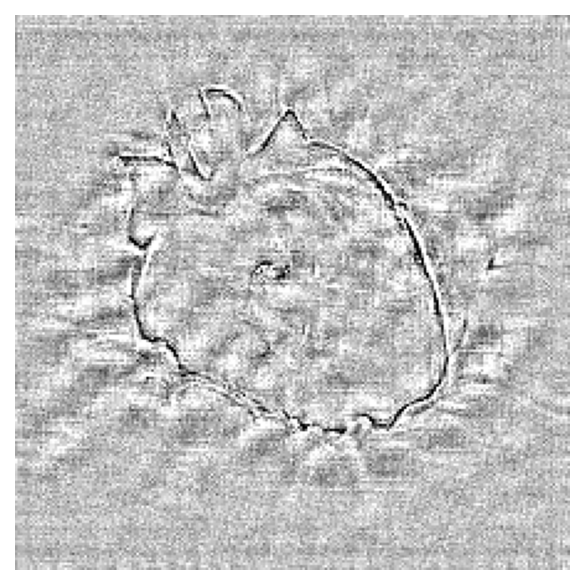}
     \end{subfigure}
     \begin{subfigure}[b]{0.24\textwidth}
         \centering
         \includegraphics[width=\textwidth]{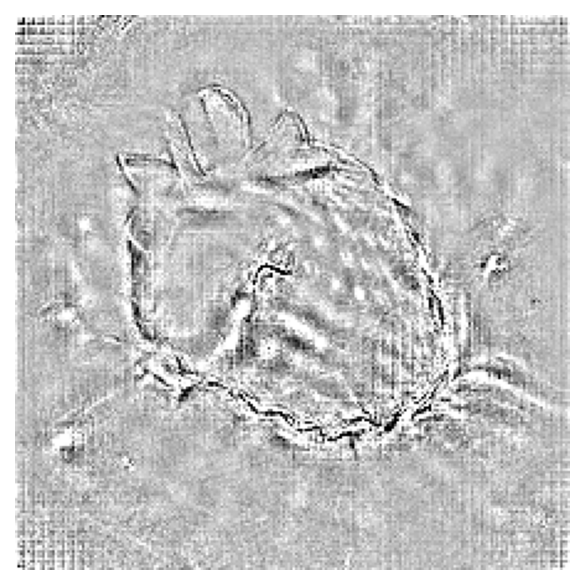}
     \end{subfigure}
     \hfill
     \begin{subfigure}[b]{0.24\textwidth}
         \centering
         \includegraphics[width=\textwidth]{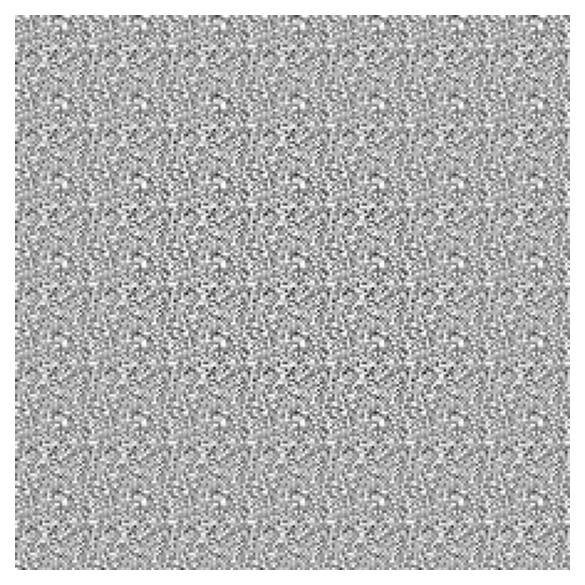}
     \end{subfigure}

     \begin{subfigure}[b]{0.24\textwidth}
         \centering
         \includegraphics[width=\textwidth]{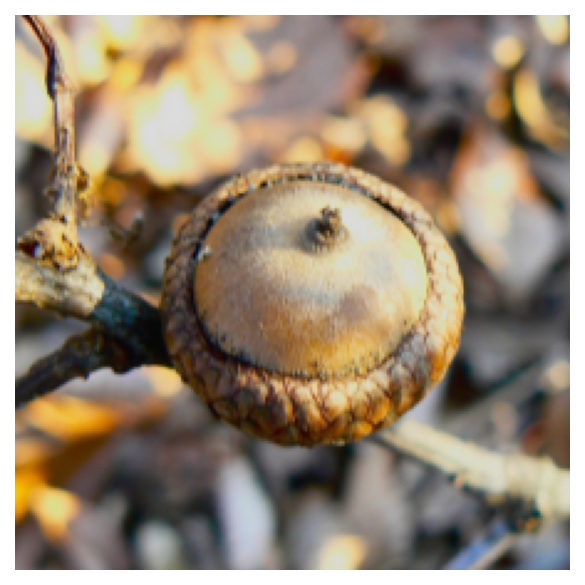}
     \end{subfigure}
     \hfill
     \begin{subfigure}[b]{0.24\textwidth}
         \centering
         \includegraphics[width=\textwidth]{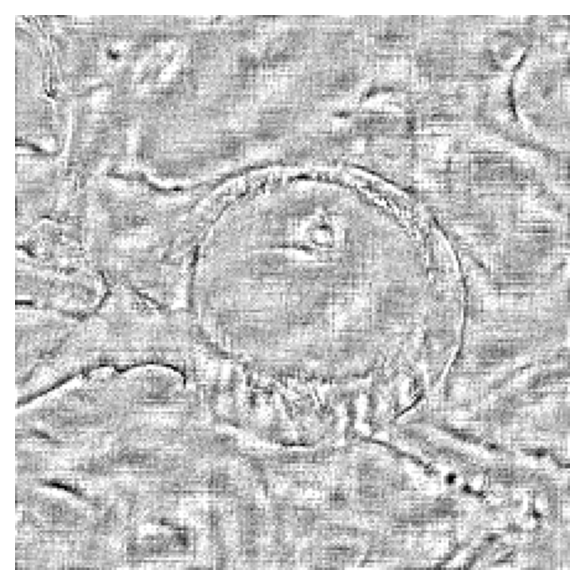}
     \end{subfigure}
     \begin{subfigure}[b]{0.24\textwidth}
         \centering
         \includegraphics[width=\textwidth]{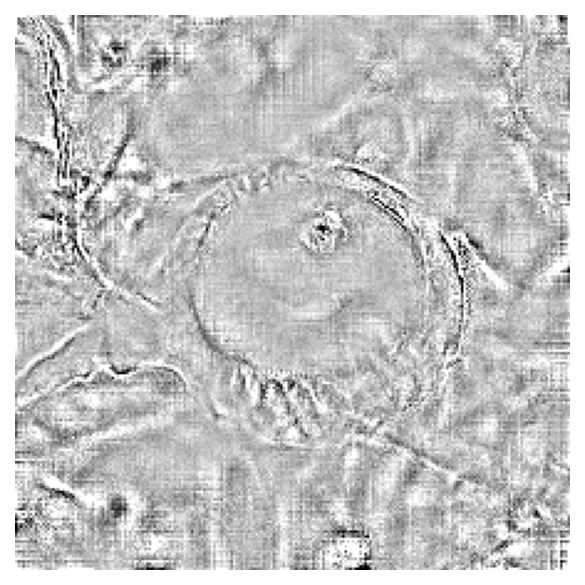}
     \end{subfigure}
     \hfill
     \begin{subfigure}[b]{0.24\textwidth}
         \centering
         \includegraphics[width=\textwidth]{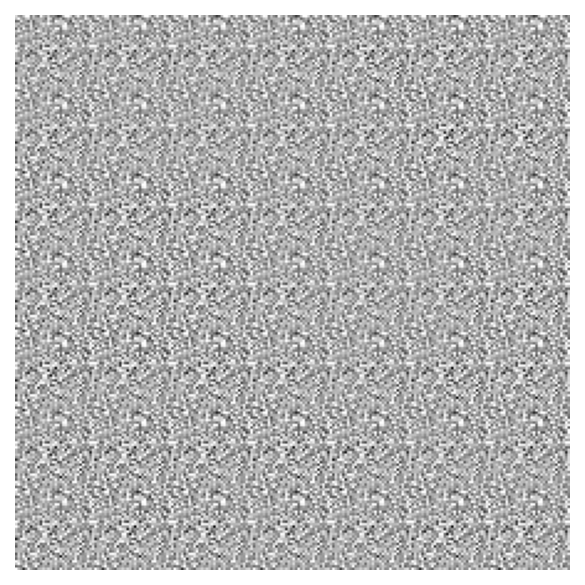}
     \end{subfigure}
     
    \caption{
    The local centroids of the pre-trained ConvNext-L DN from PyTorch.
    }
    \label{fig:saliency_map2}
\end{figure}

\begin{figure}[p]
     \centering
     \begin{subfigure}[b]{0.32\textwidth}
         \centering
         \caption*{\tiny Input}
         \includegraphics[width=\textwidth]{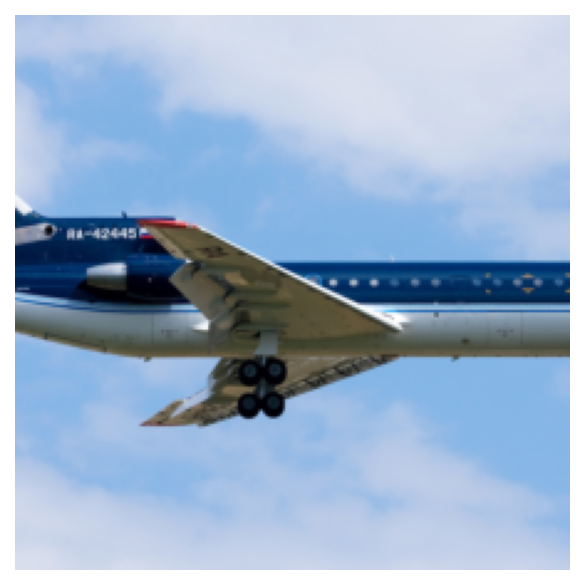}
     \end{subfigure}
     \hfill
     \begin{subfigure}[b]{0.32\textwidth}
         \centering
         \caption*{\tiny Local Centroid (Input Space to Hidden Layer)}
         \includegraphics[width=\textwidth]{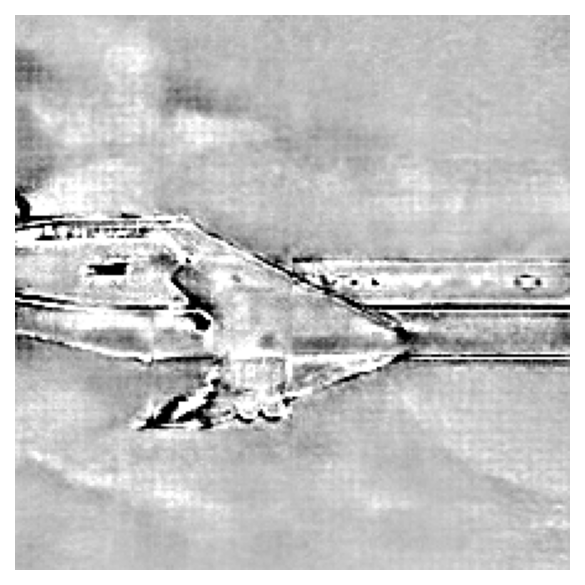}
     \end{subfigure}
     \begin{subfigure}[b]{0.32\textwidth}
         \centering
         \caption*{\tiny Local Centroid (Input Space to Output)}
         \includegraphics[width=\textwidth]{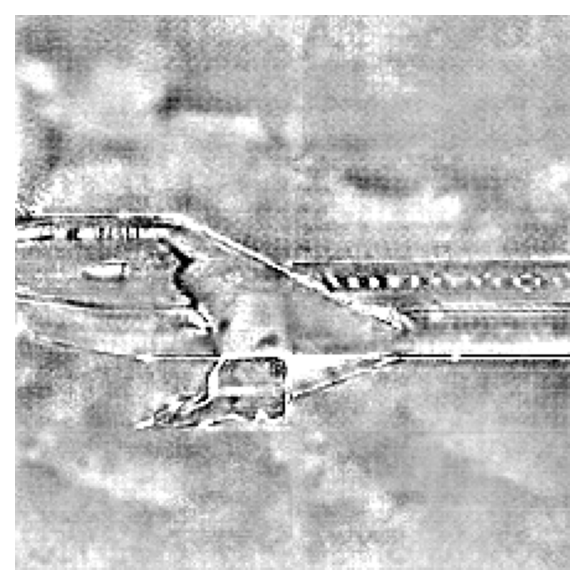}
     \end{subfigure}

     \begin{subfigure}[b]{0.32\textwidth}
         \centering
         \includegraphics[width=\textwidth]{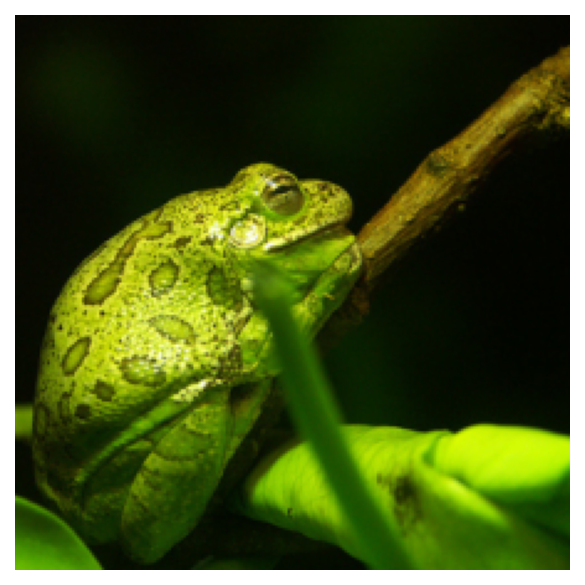}
     \end{subfigure}
     \hfill
     \begin{subfigure}[b]{0.32\textwidth}
         \centering
         \includegraphics[width=\textwidth]{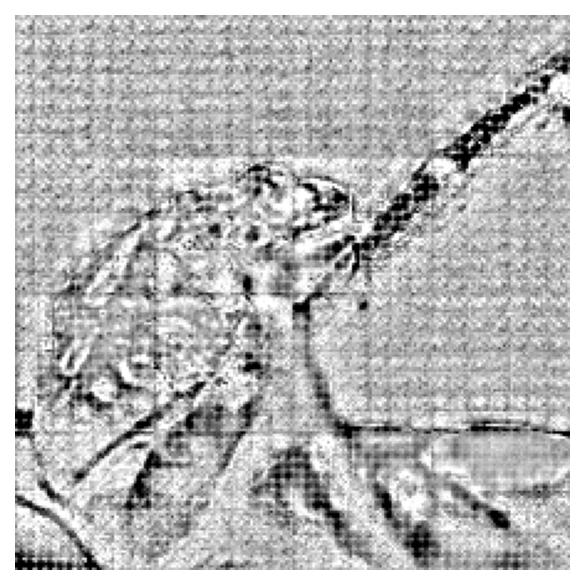}
     \end{subfigure}
     \begin{subfigure}[b]{0.32\textwidth}
         \centering
         \includegraphics[width=\textwidth]{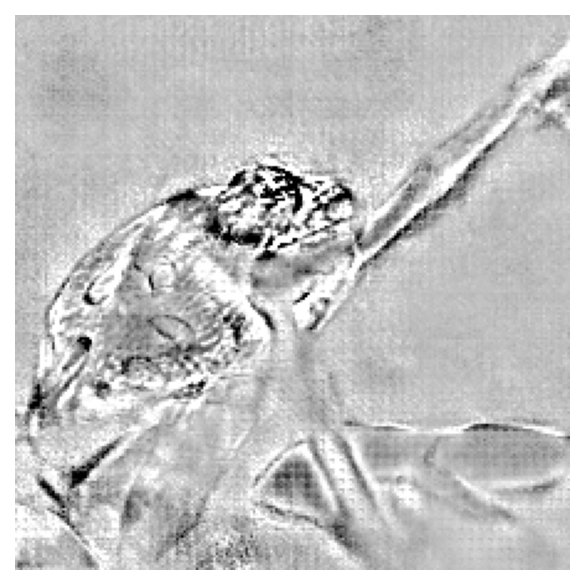}
     \end{subfigure}
    \caption{
    The local centroids of the adversarially Swin-B transformer from~\citet{rodriguez-munozCharacterizingModelRobustness2024}.}
    \label{fig:saliency_map3}
\end{figure}

\begin{figure}[p]
     \centering
     \begin{subfigure}[b]{0.48\textwidth}
         \centering
         \includegraphics[width=0.6\textwidth]{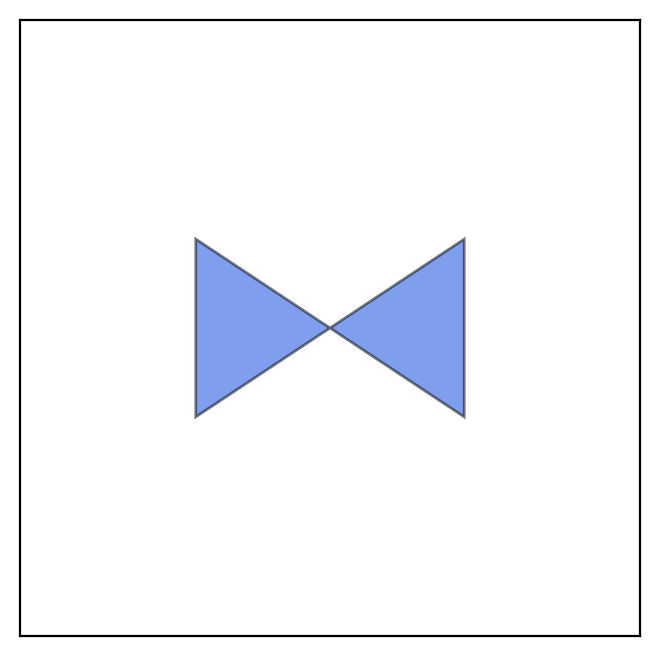}
         \caption*{Input Distribution}
     \end{subfigure}
     \hfill
     \begin{subfigure}[b]{0.48\textwidth}
         \centering
         \includegraphics[width=0.6\textwidth]{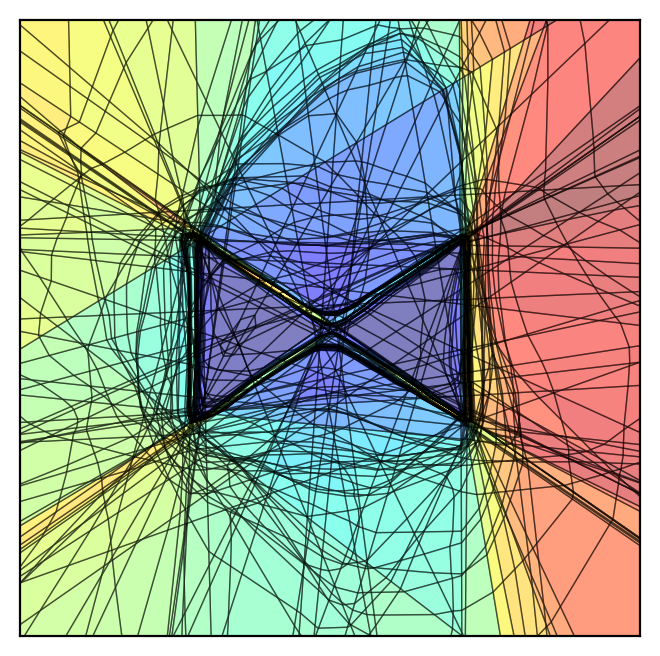}
         \caption*{Functional Geometry}
     \end{subfigure}
     \hfill
     \begin{subfigure}[b]{0.3\textwidth}
         \centering
         \includegraphics[width=\textwidth]{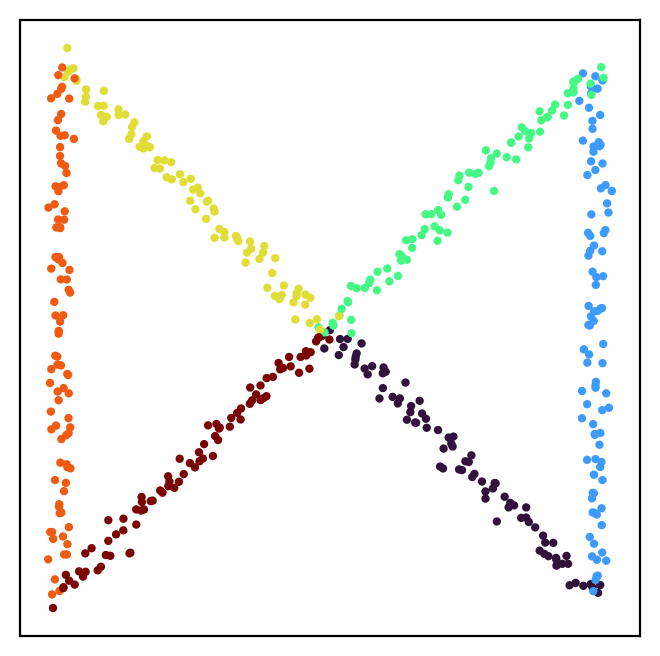}
         \caption*{Input Samples}
     \end{subfigure}
     \hfill
     \begin{subfigure}[b]{0.3\textwidth}
         \centering
         \includegraphics[width=\textwidth]{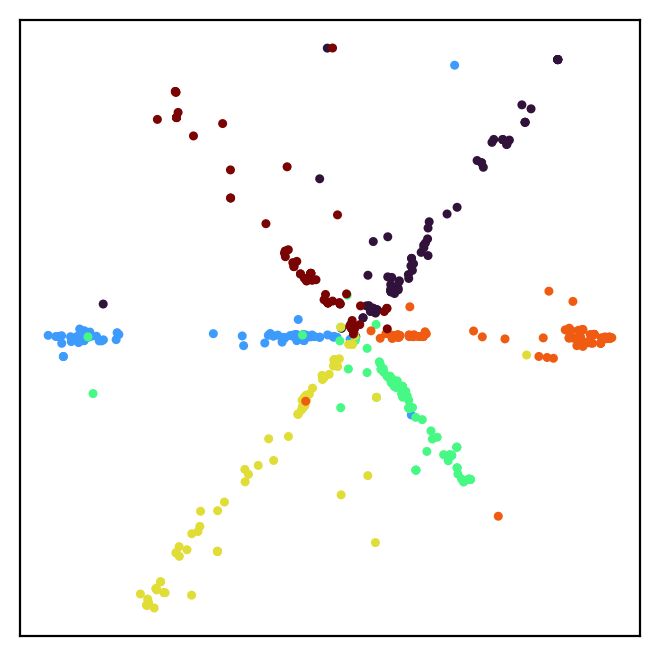}
         \caption*{Centroids}
     \end{subfigure}
     \hfill
     \begin{subfigure}[b]{0.3\textwidth}
         \centering
         \includegraphics[width=\textwidth]{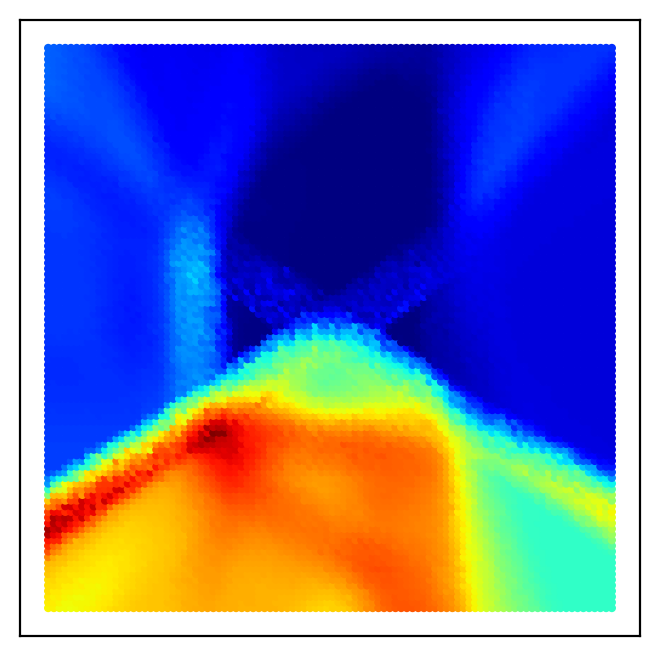}
         \caption*{$s^{(i,3)}$}
     \end{subfigure}
    \hfill
     \begin{subfigure}[b]{0.48\textwidth}
         \centering
         \includegraphics[width=0.6\textwidth]{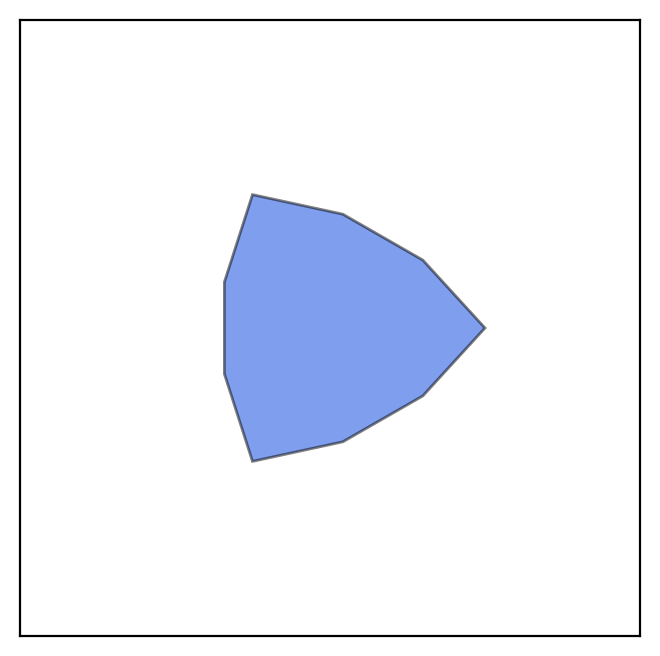}
         \caption*{Input Distribution}
     \end{subfigure}
     \hfill
     \begin{subfigure}[b]{0.48\textwidth}
         \centering
         \includegraphics[width=0.6\textwidth]{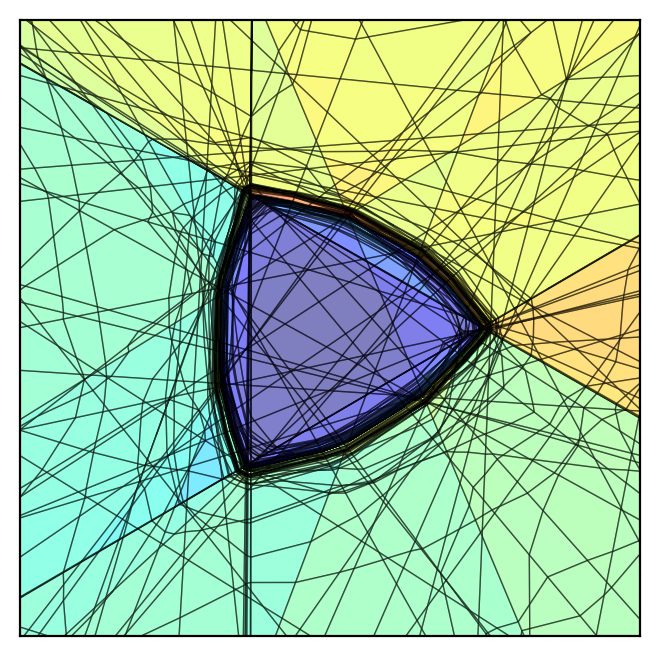}
         \caption*{Functional Geometry}
     \end{subfigure}
     \hfill
     \begin{subfigure}[b]{0.3\textwidth}
         \centering
         \includegraphics[width=\textwidth]{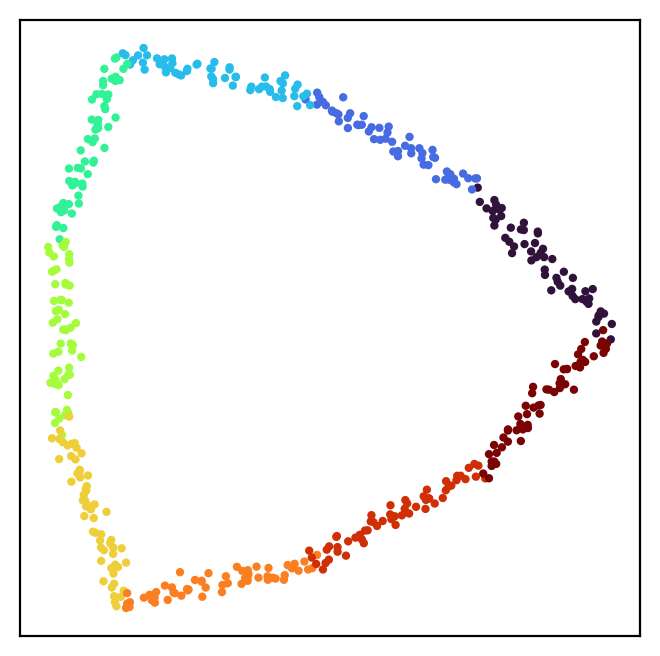}
         \caption*{Input Samples}
     \end{subfigure}
     \hfill
     \begin{subfigure}[b]{0.3\textwidth}
         \centering
         \includegraphics[width=\textwidth]{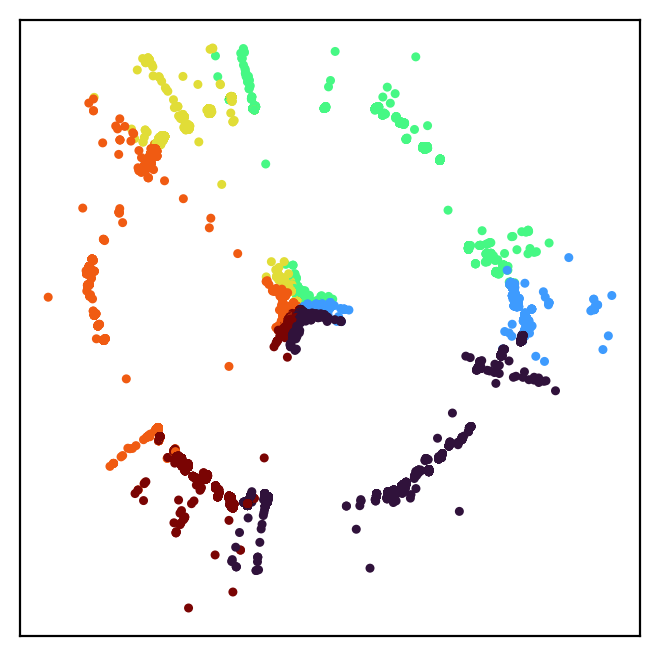}
         \caption*{Centroids}
     \end{subfigure}
     \hfill
     \begin{subfigure}[b]{0.3\textwidth}
         \centering
         \includegraphics[width=\textwidth]{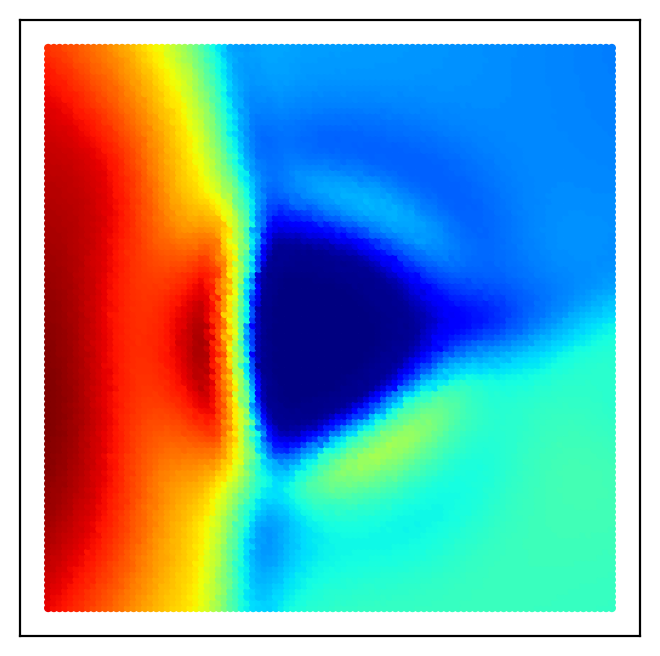}
         \caption*{$s^{(i,2)}$}
     \end{subfigure}
    \caption{
    Here we perform some of the same analyses as conducted previously, but with a DN trained on a bowtie-shaped polygon, \textbf{top} two rows, and a reuleaux-shaped polygon, \textbf{bottom} two rows.}
    \label{fig:other_shapes}
\end{figure}
\begin{figure}[p]
    \centering
    \includegraphics[width=\linewidth]{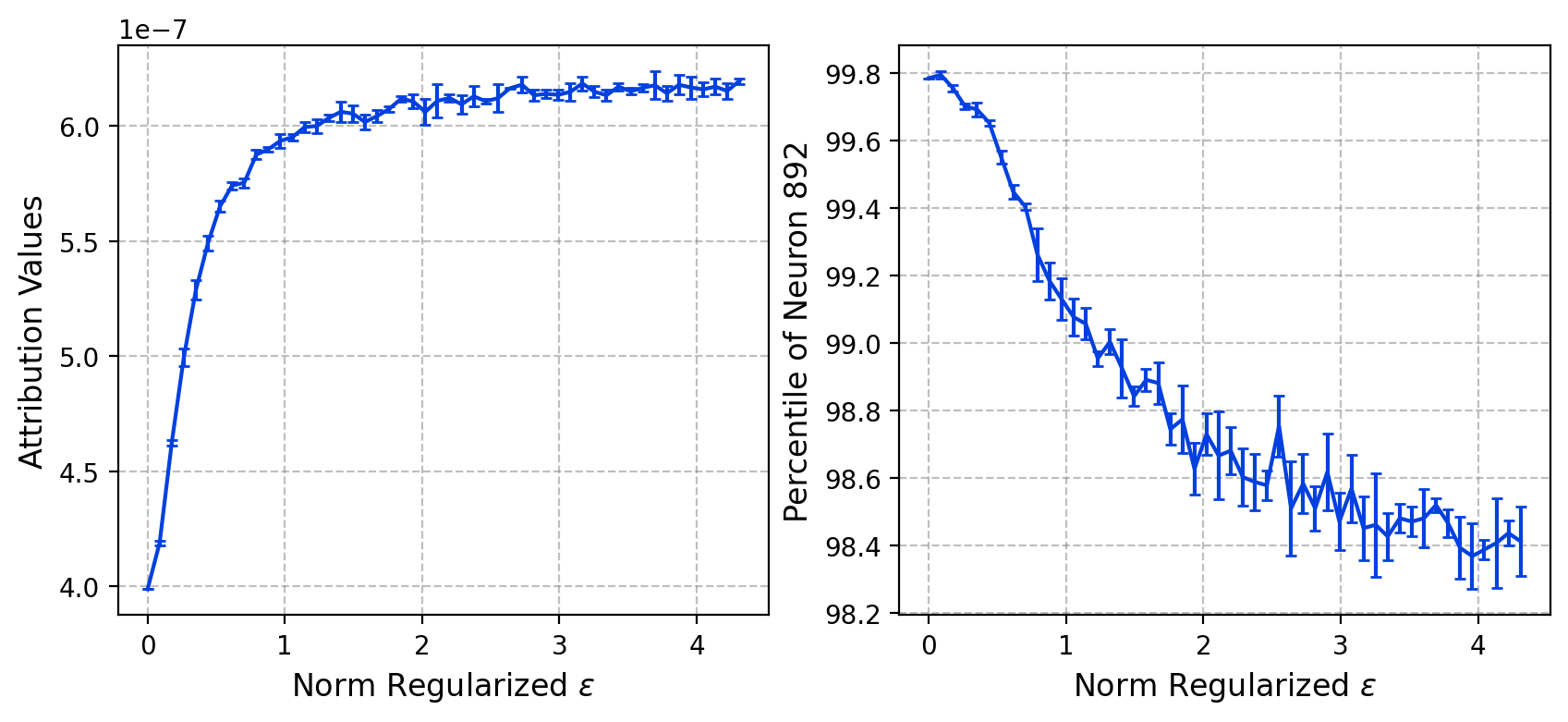}
    \caption{
    The neuron attribution metric of \Cref{eq:neuron_attribution} is a robust measure for interpreting the neurons of a DN.
    Here we consider the robustness of \Cref{eq:neuron_attribution} for the experiment in \Cref{fig:circuit_discovery}.
    More specifically, we test how the neurons' attribution values change as we consider increasingly large neighborhoods.
    The neighborhoods we consider are of the form $\mathcal{B}_{\epsilon}(\vx)$, where $\vx$ is the embedding of the last token of a prompt at the $31^{\text{st}}$ of GPT2-Large.
    We consider $\epsilon$ normalized by the norm of the centroid of $\vx$ at this layer of the DN.
    To compute \Cref{eq:neuron_attribution} at each neuron of the layer, we sample $256$ embeddings from this neighborhood.
    In the left plot, we observe how the average attribution value of each neuron changes across random samplings of this neighborhood. 
    In the right plot, we observe how the percentile value of the $892^{\text{nd}}$ changes across these random samplings.
    The error bars represent one standard deviation in the observed values.}
    \label{fig:neuron_attribution_robustness}
\end{figure}

\end{document}